\documentclass[11pt, a4paper, logo, copyright, ]{deepmind}
\usepackage[authoryear, sort&compress, round]{natbib}
\bibliographystyle{abbrvnat}

\usepackage{url}

\usepackage[decisionutilitycolor]{influence-diagrams}

\usepackage{amsmath}    
\usepackage{amssymb}    
\usepackage{amsthm}
\usepackage{amsfonts}
\usepackage{thmtools}
\usepackage{thm-restate}
\usepackage{bm}         
\usepackage{booktabs}   
\usepackage{enumitem}   
\usepackage[utf8]{inputenc}  
\usepackage{mathtools}  
\usepackage{microtype}  
\usepackage{chngcntr}
\usepackage{multirow}
\usepackage{natbib} 
\usepackage{tcolorbox}  
\usepackage{xcolor}     
\usepackage{subcaption}

\usepackage[capitalize]{cleveref}
\usepackage{nameref}
\crefname{assumption}{Assumption}{Assumptions}

\usepackage{symbols}

\theoremstyle{plain}

\newtheorem{theorem}{Theorem}
\newtheorem{lemma}{Lemma}

\theoremstyle{definition}
\newtheorem{definition}{Definition}
\newtheorem{assumption}{Assumption}
\theoremstyle{plain}

\newtheorem*{example*}{Example continued}

\usetikzlibrary{matrix}
\tikzset{ 
table/.style={
  matrix of nodes,
  row sep=-\pgflinewidth,
  column sep=-\pgflinewidth,
  nodes={rectangle,text width=10em,align=center},
  text depth=1.25ex,
  text height=2.5ex,
  nodes in empty cells
},
row 1/.style={nodes={fill=black!10,text depth=0.4ex,text height=2ex}},
}

\begin{document}

\title{Discovering Agents}

\begin{abstract}
Causal models of agents have been used to analyse the safety aspects of machine learning systems.
But identifying agents is non-trivial -- often the causal model is just assumed by the modeler without much justification -- and modelling failures can lead to mistakes in the safety analysis. 
This paper proposes the first formal causal definition of agents -- roughly that agents are systems that would adapt their policy if their actions influenced the world in a different way.
From this we derive the first causal discovery algorithm for discovering agents from empirical data, and give algorithms for translating between causal models and game-theoretic influence diagrams.  
We demonstrate our approach by resolving some previous confusions caused by incorrect causal modelling of agents.
\end{abstract}

\author[1]{Zachary Kenton}
\author[1]{Ramana Kumar}
\author[2]{Sebastian Farquhar}
\author[1]{Jonathan Richens}
\author[3]{Matt MacDermott}
\author[1]{Tom Everitt}
    
\correspondingauthor{zkenton@deepmind.com}
    
\affil[1]{DeepMind}
\affil[2]{University of Oxford, work begun while author was at DeepMind}
\affil[3]{Imperial College London}
    
\reportnumber{} 
\renewcommand{\today}{}
    
\maketitle

\section{Introduction}
\label{sec:intro}

How can we recognise agents? In economics textbooks, certain entities are clearly delineated as choosing actions to maximise utility. In the real world, however, distinctions often blur. Humans may be almost perfectly agentic in some contexts, while manipulable like tools in others. Similarly, in advanced reinforcement learning (RL) architectures, systems can be composed of multiple non-agentic components, such as actors and learners, and trained in multiple distinct phases with different goals, from which an overall goal-directed agentic intelligence emerges.

It is important that we have tools to discover goal-directed agents. 
Artificially intelligent agents that competently pursue their goals might be dangerous depending on the nature of this pursuit, because goal-directed behaviour can become pathological outside of the regimes the designers anticipated \citep{bostrom2017superintelligence,yudkowsky2008artificial}, and because they may pursue convergent instrumental goals, such as resource acquisition and self-preservation \citep{omohundro2008the}. 
Such safety concerns motivate us to develop a formal theory of goal-directed agents, to facilitate our understanding of their properties, and avoid designs that pose a safety risk.

The central feature of agency for our purposes is that agents are systems whose outputs are \emph{moved by reasons} \citep{dennett1987intentional}. In other words, the reason that an agent chooses a particular action is that it ``expects it'' to precipitate a certain outcome which the agent finds desirable. For example, a firm may set the price of its product to maximise profit. This feature distinguishes agents from other systems, whose output might accidentally be optimal for producing a certain outcome. For example, a rock that is the perfect size to block a pipe is accidentally optimal for reducing water flow through the pipe.

Systems whose actions are moved by reasons, are systems that would act differently if they ``knew'' that the world worked differently. For example, the firm would be likely to adapt to set the price differently, if consumers were differently price sensitive (and the firm was made aware of this change to the world). In contrast, the rock would not adapt if the pipe was wider, and for this reason we don't consider the rock to be an agent.

Behavioural sensitivity to environment changes can be modelled formally with the language of causality and structural causal models (SCMs) \citep{pearl2009causality}.  
To this end, our first contribution is to introduce \emph{mechanised} SCMs (\cref{sec:mechanised-scms,sec:edge-labelled}), a variant of mechanised causal games \citep{hammond2021reasoning}, and give an algorithm which produces its graph given the set of interventional distributions (\cref{sec:discovering-edge-labelled-mcgs}). 
Building on this, our second contribution is an algorithm for determining which variables represent agent decisions and which represent the objectives those decisions optimise (i.e., the \emph{reasons that move the agent}), see \cref{sec:discover-game-graphs}. 
This lets us convert a mechanised SCM into a (structural) causal game \citep{hammond2021reasoning}\footnote{We can also reverse this, converting a causal game into a mechanised SCM (\cref{sec:mech-id}).}. 
Combined, this means that under suitable assumptions, we can infer a game graph from a set of experiments, and in this sense \emph{discover agents}.
Our third contribution is more philosophical, giving a novel formal definition of agents based on our method, see \cref{sec:other-characterisations-of-agents}. 

These contributions are important for several reasons. First, they ground game graph representations of agents in causal experiments.
These experiments can be applied to real systems, or used in thought-experiments to determine the correct game graph and resolve confusions (see \cref{sec:examples}). With the correct game graph obtained, the researcher can then use it to understand the agent’s incentives and safety properties \citep{everitt2021agent,halpern2018towards}, with an extra layer of assurance that a modelling mistake has not been made. 
Our algorithms also 
open a path to automatic inference of game graphs, especially in situations where experimentation is cheap, such as in software simulations.

\begin{figure}
    \begin{subfigure}[b]{0.25\textwidth}
        \centering
        \includegraphics[width=\textwidth]{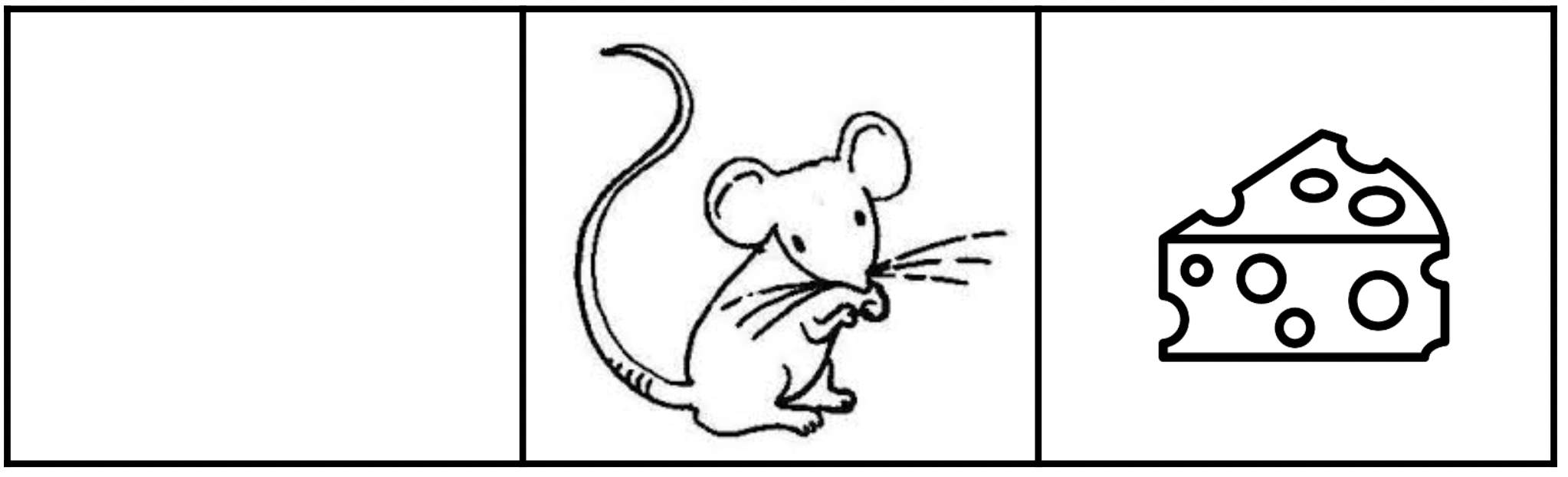}
        \caption{Gridworld}
        \label{fig:mouse-gridworld}
    \end{subfigure}
    \begin{subfigure}[b]{0.25\textwidth}
        \centering
        \begin{influence-diagram}

          \node (D) [decision, player1] {$\decisionvar$};
    
          \node (X) [right = of D] {$\structvar$};
    
          \node (U) [right = of X, utility, player1] {$\utilvar$};

          \edge {D} {X};
    
          \edge {X} {U};

        \end{influence-diagram}
        \caption{Game graph, $\macid$}
        \label{subfig:mouse-mascid-real}
    \end{subfigure}
    \begin{subfigure}[b]{0.25\textwidth}
        \centering
        \begin{influence-diagram}

          \node (D) {$\decisionvar$};
    
          \node (X) [right = of D] {$\structvar$};
    
          \node (U) [right = of X] {$\utilvar$};
          
          \node (MD) [above = of D, mechanism] {$\mecvar{\decisionvar}$};
    
          \node (MX) at (X|-MD) [mechanism] {$\mecvar{\structvar}$};
    
          \node (MU) at (U|-MD) [mechanism] {$\mecvar{\utilvar}$};

          \edge {D} {X};
    
          \edge {X} {U};

          \path (MD) edge[function-edge] (D);
          \path (MX) edge[function-edge] (X);
          \path (MU) edge[function-edge] (U);

          \path (MU) edge[terminal-edge, bend right=27] (MD);

          \path (MX) edge[mechanism-edge] (MD);

        \end{influence-diagram}
        \caption{Mech.\ causal graph, $\xscg$}
        \label{subfig:mouse-xcg-model}
    \end{subfigure}
    \begin{subfigure}[b]{0.23\textwidth}
     \begin{influence-diagram}
    \cidlegend{
  \legendrow              {}  {chance} \\
  \legendrow              {mechanism} {mechanism} \\
  \legendrow              {decision,player1}  {decision}\\
  \legendrow              {utility,player1}   {utility}\\
  \legendrow[causal]      {draw=none} {terminal} \\
  \legendrow[information] {draw=none} {non-terminal} }
\edge[terminal-edge] {causal.west} {causal.east};
\edge[mechanism-edge] {information.west} {information.east};
    \end{influence-diagram}
    \end{subfigure}

    \caption{Different graphical representations for the the mouse example (\cref{sec:intro-example}).
    }
    \label{fig:mouse}
\end{figure}

\subsection{Example}
\label{sec:intro-example}
To illustrate our method in slightly more detail,
consider the following minimal example, consisting of a gridworld with three squares, and with a mouse starting in the middle square (\cref{fig:mouse-gridworld}).
The mouse can go either left or right, represented by binary variable $D$.  
There is some ice which may cause the mouse to slip: the mouse's position, $X$, follows its choice, $D$, with probability $p=0.75$, and slips in the opposite direction with probability $1-p$.
Cheese is in the right square with probability $q=0.9$, and the left square with probability $1-q$. The mouse gets a utility, $U$, of $1$ for getting the cheese, and zero otherwise.
The directed edges $D\to X$ and $X\to U$ represent direct causal influence.

The decision problem can be represented with the game graph in \cref{subfig:mouse-mascid-real}: the agent makes a decision, $D$, which affects its position, $X$, which affects its utility, $U$.
The intuition that the mouse would choose a different behaviour for other settings of the parameters $p$ and $q$, can be captured by a  \emph{mechanised causal graph} (\cref{subfig:mouse-xcg-model}).
This graph contains additional \emph{mechanism nodes}, $\mecvar{D}, \mecvar{X}, \mecvar{U}$ in black, representing the mouse's decision rule and the parameters $p$ and $q$.
As usual, edges between mechanisms represent direct causal influence, 
and show that if we intervene to change the cheese location, say from $q=0.9$ to $q=0.1$, and the mouse is aware\footnote{The mouse could become \emph{aware} of this through learning from repeated trials under soft interventions of $X$ and $U$ which occur on every iteration, see \cref{sec:mechanised-scms} for further discussion.}
of this, then the mouse's decision rule changes (since it's now more likely to find the cheese in the leftmost spot).
Experiments that change $p$ and $q$ in a way that the mouse is aware of, generate interventional data that can be used to infer both the mechanised causal graph (\cref{subfig:mouse-xcg-model}) and from there the game graph (\cref{subfig:mouse-mascid-real}). 
The edge labels (colours) in \cref{subfig:mouse-xcg-model} will be explained in \cref{sec:edge-labelled}.

\subsection{Other Characterisations of Agents}
\label{sec:other-characterisations-of-agents}

To put our contribution in context, we here describe previous characterisations of agents:
\begin{itemize}
    \item \emph{The intentional stance}: an agent's behaviour can be usefully understood as trying to optimise an objective \citep{dennett1987intentional}.
    \item \emph{Cybernetics}: an agent's behaviour adapts to achieve an objective (e.g.\ \citealp{Wiener1961,Ashby1956}).
    \item \emph{Decision theory / game theory / economics / AI}: An agent selects a policy to optimise an objective.
    \item An agent is a system whose behaviour can be \emph{compressed} with respect to an objective function \citep{orseau2018agents}.
    \item ``An \emph{optimising system} is ... a part of the universe [that] moves predictably towards a small set of target configurations'' \citep{flint2020the}.
    \item A \emph{goal-directed system} has self-awareness, planning, consequentialism, scale, coherence, and flexibility \citep{ngo2020agi}.
    \item Agents are ways for the future to influence the past (via the agent's model of the future) \citep{garrabrant2021saving,Foerster1951}.
\end{itemize}
For broader reviews, see also \citet{shimi2021lit} and \cite{wooldridge1995}.

Our proposal can be characterised as: \emph{agents are systems that would adapt their policy if their actions influenced the world in a different way.}
This may be read as an alternative to, or an elaboration of, the intentional stance and cybernetics definitions (depending on how you interpret them) couched in the language of causality and counterfactuals.
Our definition is fully consistent with the decision theoretic view, as agents choose their behaviour differently depending on its expected consequences, but doesn't require us to know who is a decision maker in advance, nor what they are optimising.

The formal definition by \citeauthor{orseau2018agents} can be viewed as an alternative interpretation of the intentional stance: the behaviour of systems that choose their actions to optimise an objective function should be highly compressible with respect to that objective function.
However, \citeauthor{orseau2018agents}'s definition suffers from two problems:
First, in simple settings, where there is only a small and finite number of possible behaviours (e.g.\ the agent decides a single binary variable), it will not be possible to compress any policy beyond its already very short description.
Second, the compression-based approach only considers what the system actually does.
It may therefore incorrectly classify as agents systems with accidentally optimal input-output mappings, such as the water-blocking rock above.
Our proposal avoids these issues, as even a simple policy may adapt, but the rock will not.

The insightful proposal by \citeauthor{flint2020the} leaves open the question of what part of an optimising system is the agent, and what part is its environment.
He proposes the additional property of \emph{redirectability}, but its not immediately clear how it could be used to identify decision nodes in a causal graph (intervening on almost any node will change the outcome-distribution).

The goal-directed systems that \citeauthor{ngo2020agi} has in mind are agentic in a much stronger sense than we are necessarily asking for here, and each of the properties contain room for interpretation.
However, our definition is important for goal-directedness, as it distinguishes incidental influence that a decision might have on some variable, from more directed influence: only a system that counterfactually adapts can be said to be trying to influence the variable in a systematic way. Counterfactual adaptation can therefore be used as a \emph{test for goal-directed influence}.

Our definition also matches closely the backwards causality definition of agency by \citet{garrabrant2021saving}, as can be seen by the time-opposing direction of the edges $\mecvar{X}\to \mecvar{D}$ and $\mecvar{U}\to\mecvar{D}$ in \cref{subfig:mouse-xcg-model}.
It also fits nicely with formalisations of agent incentives \citep{halpern2018towards,everitt2021agent}, which effectively rely on behaviour in counterfactual scenarios of the form that we consider here.
This is useful, as a key motivation for our work is to analyse the intent and incentives of artificial agents.

\subsection{What do we consider an agent}

Before digging into the mathematical details of our proposal, let us make some brief remarks about what it considers an agent and not.
From a pre-theoretic viewpoint, humans might be the most prototypical example of an agent.
Our method reliably classifies humans as agents, because humans would usually adapt their behaviour if suitably informed about changes to the consequences of their actions.
It's also easy to communicate the change in action-consequences to a human, e.g.\ using natural language.
But what about border-line cases like thermostats or RL agents?
Here, the answer of our definition depends on whether one considers the \emph{creation process} of a system when looking for adaptation of the policy. 
Consider, for example, changing the mechanism for how a heater operates, so that it cools rather than heats a room. 
An existing thermostat will not adapt to this change, and is therefore not an agent by our account. 
However, if the designers were aware of the change to the heater, then they would likely have designed the thermostat differently.
This adaptation means that the thermostat \emph{with its creation process} is an agent under our definition.
Similarly, most RL agents would only pursue a different policy if retrained in a different environment.
Thus we consider the system of the \emph{RL training process} to be an agent, but the \emph{learnt RL policy} itself, in general, won’t be an agent according to our definition (as after training, it won’t adapt to a change in the way its actions influence the world, as the policy is frozen).

For the purpose of detecting goal-directed behaviour, the relevant notion of agency often includes the creation process.
Being forced to consider the creation process of the system, rather than just the system itself, may seem inconvenient.
However, we consider it an important insight that simple forms of agents often derive much of their goal-directedness from the process that creates them.

\subsection{Outline}
Our paper proceeds as follows: we give relevant technical background in \cref{sec:background}; give our main contribution, algorithms for discovering agents, in \cref{sec:alg-disc-agents}; show some example applications of this in \cref{sec:examples} followed by a discussion in \cref{sec:discussion}.

\section{Background}
\label{sec:background}
Before we get to our algorithms for discovering agents, we cover some necessary technical background. The mathematical details can be found in \cref{app:math-background}.
Throughout, random variables are represented with roman capital letters (e.g.\ $V$), and their outcomes with lower case letters (e.g.\ $v$). We use bold type to indicate vectors of variables, $\evars$, and vectors of outcomes $\evals$. 
For simplicity, each variable $V$ only has a finite number of possible outcomes, denoted $\dom(V)$.
For a set of variables, $\dom(\bm{V}) = \prod_{V\in\bm{V}}\dom(V)$.

In structural causal models (SCMs; \citealp{pearl2009causality}), randomness comes from exogenous (unobserved) variables, $\exovars$, whilst deterministic structural equations relate endogenous variables, $\evars$, to each other and to the exogenous ones, i.e., $\evar = f^\evar(\evars, \exovar^{\evar})$ (\cref{def:scm}).
An SCM $M$ induces a causal graph $G$ over the endogenous variables, in which there is an edge  $W\to V$ if $f^V(\evars, \exovar^{\evar})$ depends on the value of $W$ (\cref{def:cg}).
The SCM is \emph{cyclic} if its induced graph is, and \emph{acyclic} otherwise. We never permit self-loops $V\to V$.
Parents, children, ancestors and descendants in the graph are denoted $\Pa^\evar$, $\Ch^\evar$, $\Anc^\evar$, and $\Desc^\evar$, respectively
(neither include the variable $\evar$).
The family is denoted by $\Fa^\evar = \Pa^\evar \cup \{\evar\}$. 
Interventions on $\bm{Y}\subseteq \evars$, denoted $\doo(\bm{Y} = \bm{y})$,
can be realised as replacements of a subset of structural equations,
so that $\bm{Y} = \bm{f}^{\bm{Y}}(\evars, \exovar^{\bm{Y}})$ gets replaced with $\bm{Y} = \bm{y}$ (\cref{def:submodel}).
The joint distribution $P(\evars \mid \doo(\bm{Y}=\bm{y}))$ 
is called the \emph{interventional distribution} associated with intervention $\doo(\bm{Y}=\bm{y})$.
A \emph{soft} intervention instead replaces $\bm{f}^{\bm{Y}}$ with some other (potentially non-constant) functions $\bm{g}^{\bm{Y}}$.

A (structural) causal game \citep{everitt2021agent,hammond2021reasoning,Koller2003-yh} is similar to an SCM, but where the endogenous variables are partitioned into chance, decision, and utility variables (denoted $\structvars$, $\decisionvars$ and $\utilvars$ respectively), and for which the decision variables have no structural equations specified. Instead a decision-maker is free to choose a probability distribution over actions $D$, given the information revealed by the outcome of the parents of $D$ (\cref{def:scim}).
The decision variables belonging to agent $A$ are denoted $\decisionvars^A\subseteq \decisionvars$, and the agent's utility is taken to be the sum of the agent's utility variables, $\utilvars^A\subseteq \utilvars$.
A collection of decision rules for all of a player's decisions is called a \emph{policy}.
Policies for all players are called \emph{policy profiles}.

A causal game is associated with a \emph{game graph} with square, round and diamond nodes for decision, chance and utility variables, respectively, with colours associating decision and utility nodes with different agents (\cref{def:cid}).
Edges into chance and utility nodes mirror those of an SCM, while
edges into decision nodes represent what information is available, i.e.\ $\evaralt \rightarrow \decisionvar$ is present if the outcome of $\evaralt$ is available when making the decision $\decisionvar$, with information edges displayed with dotted lines.
An example of a game graph is shown in \cref{subfig:mouse-mascid-real}.

Given a causal game, each agent can set a decision rule, ${\decisionrule}^D$, for each of their decisions, $\decisionvar$, which maps the information available at that decision to an outcome of the decision.
Formally, the decision rule $\pi^D$ is a deterministic function of $\Pa^D$ and $\exovar^D$, where $\exovar^D$ provides randomness to enable stochastic decisions.
This means that the decision rules can be combined with the causal game to form an SCM, which can be used to compute each agent's expected utility.
In the single agent case, the decision problem represented by the causal game is to select an optimal decision rule to maximise the expected utility (\cref{def:optimality}).
With multiple agents, solution concepts such as Nash Equilibrium (\cref{def:nash}) or Subgame-Perfect Nash Equilibrium (\cref{def:spe}) are needed, because in order to optimise their decision rules agents must also consider how other agents will will optimise theirs.

Similar to SCMs, interventions in a causal game can be realised as replacements of a subset of structural equations. 
However, in contrast to an SCM, an intervention can be made \emph{before} or \emph{after} decision rules are selected. 
This motivates a distinction between \emph{pre-policy} and \emph{post-policy} interventions \citep{hammond2021reasoning}.
Pre-policy intervention are made before the policies are selected, and agents may adapt their policies (according to some rationality principle) to account for the intervention. 
In other words, agents are made aware of the intervention before selecting their policies.
For post-policy interventions, the intervention is applied after the agents select their policies.
The agents cannot adapt their policies, even if their selected policies are no longer rational under the intervention. 
In other words, the intervention is applied without the awareness of the agents.

\section{Algorithms for Discovering Agents}
\label{sec:alg-disc-agents}

Having discussed some background material, we now begin our main contribution: providing algorithms to discover agents from causal experiments.

This can provide guidance on whether a proposed game graph is an accurate description of a system of agents and gives researchers tools for building game graphs using experimental data.

We propose three algorithms:
\begin{itemize}
    \item \cref{alg:loo-cd-xscm}, \emph{Mechanised Causal Graph Discovery},  produces an edge-labelled mechanised causal graph based on interventional data.
    \item \cref{alg:xcg-to-macid}, \emph{Agency Identification}, takes an edge-labelled mechanised causal graph and produces the corresponding game graph.
    \item \cref{alg:macid-to-xcg}, \emph{Mechanism Identification}, takes a game graph and draws the corresponding edge-labelled mechanised causal graph.
\end{itemize}\Cref{thm:a_3-a_2-xcg,thm:a_2-a_3-mascid,thm:correctness-algo} establish their correctness, and \cref{fig:overview} visualises their relationships.

\begin{figure}
\begin{minipage}[c]{0.7\textwidth}
    \centering
    \begin{tikzpicture}[ampersand replacement=\&]
        \matrix (mat) [table]
        {
        Game Theory \& Causality \\
        $\mathcal{I}$
        \&  \\
         \& $\xscg$ \\
        $\mascid$ \&  \\
        $\mascid$ \&  \\
         \& $\xscg$ \\
        $\mascid'$ \&  \\
         \& $\xscg$ \\
        $\mascid$ \&  \\
        \& $\xscg'$ \\
        };
        
        \draw 
            ([xshift=-.5\pgflinewidth]mat-1-1.south west) --   
            ([xshift=-.5\pgflinewidth]mat-1-2.south east);
        \draw 
            ([xshift=-.5\pgflinewidth]mat-4-1.south west) --   
            ([xshift=-.5\pgflinewidth]mat-4-2.south east);
        \draw 
            ([xshift=-.5\pgflinewidth]mat-7-1.south west) --   
            ([xshift=-.5\pgflinewidth]mat-7-2.south east);

        \begin{scope}[shorten >=7pt,shorten <= 7pt]
        \draw[->]  (mat-2-1.center) -- (mat-3-2.center) node[midway,sloped,above] {\cref{alg:loo-cd-xscm}};
        \draw[->]  (mat-3-2.center) -- (mat-4-1.center) node[midway,sloped,above] {\cref{alg:xcg-to-macid}};
        \draw[dashed, <->, red]  (mat-2-1.center) -- (mat-4-1.center) node[midway,left] {\cref{thm:correctness-algo}};
        
        \draw[->]  (mat-5-1.center) -- (mat-6-2.center) node[midway,sloped,above] {\cref{alg:macid-to-xcg}};
        \draw[->]  (mat-6-2.center) -- (mat-7-1.center) node[midway,sloped,above] {\cref{alg:xcg-to-macid}};
        \draw[dashed, <->, red]  (mat-5-1.center) -- (mat-7-1.center) node[midway,left] {\cref{thm:a_2-a_3-mascid}};
        
        \draw[->]  (mat-8-2.center) -- (mat-9-1.center) node[midway,sloped,above] {\cref{alg:xcg-to-macid}};
        \draw[->]  (mat-9-1.center) -- (mat-10-2.center) node[midway,sloped,above] {\cref{alg:macid-to-xcg}};
        \draw[dashed, <->, red]  (mat-8-2.center) -- (mat-10-2.center) node[midway,right] {\cref{thm:a_3-a_2-xcg}};
        \end{scope}
    \end{tikzpicture}
    \end{minipage}
    \begin{minipage}[c]{0.27\textwidth}
    \caption{Overview of our three theorems. Each provides relations between a game-theoretic mechanised causal game, $\mec{\mascim}$, with its interventional distributions, $\mathcal{I}$, and with its associated game graph, $\mascid$, and a causal object -- a mechanised causal graph, $\xscg$. Our proposed algorithms 
    \cref{alg:loo-cd-xscm}, \emph{Mechanised Causal Graph Discovery};
    \cref{alg:xcg-to-macid}, \emph{Agency Identification};
    and
    \cref{alg:macid-to-xcg}, \emph{Mechanism Identification}; detail how to transform from one representation to another. 
    }
    \label{fig:overview}
    \end{minipage}
\end{figure}

\subsection{Mechanised Structural Causal Model}
\label{sec:mechanised-scms}

In this subsection we introduce \emph{mechanised SCMs}, that we will later use in a procedure for discovering agents from experimental data.
A mechanised SCM is similar to an ordinary SCM, but includes a distinction between two types of variables: object-level and mechanism variables.
The intended interpretation is that the mechanism variables parameterise how the object-level variables depend on their object-level parents.
Mechanism variables have been called \emph{regime indicators} \citep{correa2020calculus} and \emph{parameter variables} \citep{dawid2002}.
Mechanised SCMs are variants of \emph{mechanised causal games} \citet{hammond2021reasoning} that lack explicitly labelled decision and utility nodes.
\Cref{subfig:mouse-xcg-model} draws the induced graph of a mechanised SCM.

\begin{definition}[Mechanised SCM]
\label{def:mechanised-scm}
    A \emph{mechanised SCM} is an SCM 
    in which there is a partition of the endogenous variables ${\xevars} = \evars \cup \mecvars{\evars}$ into object-level variables, $\evars$ (white nodes), and mechanism variables, $\mecvars{\evars}$ (black nodes),
    with $|\evars| = |\mecvars{\evars}|$.
    Each object-level variable $\evar$ has exactly one mechanism parent, denoted $\mecvar{\evar}$, that specifies the relationship between $\evar$ and the object-level parents of $\evar$.
\end{definition}

We refer to edges between object-level nodes as \emph{object-level edges} $E^{\textrm{obj}}$, edges between mechanism nodes as \emph{mechanism edges} $E^{\textrm{mech}}$, and edges between a mechanism node and the object-level node it controls \emph{functional edges} $E^{\textrm{func}}$.
We only consider mechanised SCMs in which the object-level-only subgraph is acyclic, but we allow cycles in the mechanism-only subgraph (we follow the formalism of \citet{bongers2016foundations} when using cyclic models).

By connecting mechanism variables with causal links, we violate the commonly taken \emph{independent causal mechanism assumption} \citep{scholkopf2021toward}, though we introduce a weaker form of it in \cref{assumption:non-dec-term-parentless} (see further discussion in \cref{sec:causal-discovery-review}).

Interventions in a mechanised SCM are defined in the same way as in a standard SCM, via replacement of structural equations.
An intervention on an object-level variable $V$ changes the value of $V$ without changing its mechanism, $\mec V$\footnote{Alternatively, it can be viewed as a path-specific intervention on $\mec V$ whose effects are constrained to $V$, and does not affect other mechanism variables (assuming that the domain of $\mec V$ is rich enough to facilitate the value $V$ is intervened to).}.
This can be interpreted as the intervention occurring after all mechanisms variables have been determined/sampled.

In a causal model, it is necessary to assume that the procedure for measuring and setting (intervening on) a variable is specified. 
Mechanised SCMs thereby assume a well-specified procedure for measuring and setting both object-level and mechanism variables.
Pre- and post-policy interventions in games correspond to mechanism and object-level interventions in mechanised SCMs \citep{hammond2021reasoning}.

The distinction between mechanism and object-level variables can be made more concrete by considering repeated interactions.
In \cref{sec:intro-example}, assume that the mouse is repeatedly placed in the gridworld, and can adapt its decision rule based (only) on previous episodes.
A mechanism intervention would correspond to a (soft) intervention that takes place across all time steps, so that the mouse is able to adapt to it.
Similarly, the outcome of a mechanism can then be measured by observing a large number of outcomes of the game, after any learning dynamics has converged%
\footnote{Strictly, for complete precision one would need an infinite number of games.}.
Finally, object-level interventions correspond to intervening on variables in one particular (post-convergence) episode.
Assuming the mouse is only able to adapt its behaviour based on previous episodes, it will have no way to adapt to such interventions.
\cref{sec:app-marg-merge} has a more detailed example of marginalising and merging nodes in a repeated game to derive the mechanised causal graph and game graph.

\subsection{Edge-labelled mechanised causal graphs}
\label{sec:edge-labelled}

We now introduce an edge-labelling on mechanised SCMs, aiming to capture two aspects of mechanised SCMs that we think are inherent to agents:
\begin{enumerate}
    \item whether a variable is inherently valuable to an agent (i.e.\ is a utility node), rather than just instrumentally valuable for something downstream;
    \item whether a variable's distribution adaptively responds for a downstream reason, (i.e.\ is a decision node), rather than for no downstream consequence (e.g.\ it's distribution is set mechanistically by some natural process). 
\end{enumerate}

For the first, to determine whether a variable, $W$, is inherently valuable to an agent, we can test whether the agent still changes its policy in response to a change in the mechanism for $W$ if the children of $W$ stop responding to $W$.
For the second, to determine whether a variable, $V$, adapts for a downstream reason, we can test whether $V$'s mechanism still responds even when the children of $V$ stop responding to $V$ (i.e.\ $V$ has no downstream effect).

We can stop the children of a variable responding to it by performing hard interventions on each child.
If an agent is present, we want it to be aware of these interventions, so they should be implemented via mechanism interventions -- we call this a structural mechanism intervention:

\begin{definition}[Structural mechanism intervention]
    A \emph{structural mechanism intervention} on a variable $V$ is an intervention $\mecvar{v}$ on its mechanism variable $\mecvar{V}$ such that $V$ is conditionally independent of its object-level parents.
    That is, under $\doo(\mecvar{V} = \mecvar{v} )$, the following holds
    \begin{align}
        \label{eq:structurlal-mech-int}
        \Pr(\evar \mid \Pa^\evar, \doo(\mecvar{V} = \mecvar{v} )) = \Pr(\evar\mid \doo(\mecvar{V} = \mecvar{v} )) .
    \end{align}
\end{definition}

We can record whether points 1. and 2. above hold in a label on the relevant mechanism edge motivating the following definition: 
\begin{definition}
\label{def:edge-labelled-scm}
    A mechanised SCM is \emph{edge-labelled} if it further identifies a subset $E^{\textrm{term}}\subseteq E^{\textrm{mech}}$ of mechanism edges (dashdotted blue) $\mecvar{W}\to \mecvar{V}$, called \emph{terminal} mechanism edges, such that:
    \begin{enumerate}
        \item $\mecvar{V}$ responds to $\mecvar{W}$ even after any effects of $W$ on its children, $\Ch^W$, have been removed by means of any structural mechanism interventions on ${\Ch^W}$; and
        \item $\mecvar{V}$ does not respond to $\mecvar{W}$ if effects of $V$ on its children, $\Ch^V$, have been removed by means of all structural mechanism interventions on ${\Ch^V}$.
    \end{enumerate}
    Non-terminal mechanism edges are drawn with dashed black lines.
\end{definition}

Intuitively, the terminal edges designate the variables that an agent cares about for their own sake.
For example, the mechanism edge $\mecvar{U}\to\mecvar{D}$ in \cref{subfig:mouse-xcg-model} is terminal, because it remains when the children of the object-level variable $U$ are cut (indeed, $U$ has no children), and disappears if we cut $D$ off from its children (since then $D$ doesn't affect $X$, and hence doesn't affect $U$).
In contrast, $\mecvar{X}\to\mecvar{D}$ is non-terminal, because if the object-level link $X\to U$ is cut (i.e., the agent's position is made independent of it finding cheese), then the agent will cease adapting its policy to changes in the slip probability $p$.
The labelling of terminal links will be used in \cref{sec:discover-game-graphs} to determine that $X$ is only instrumentally valuable to the agent.

\subsection{Discovering Edge-labelled, Mechanised Causal Graphs}
\label{sec:discovering-edge-labelled-mcgs}

We next describe how edge-labelled, mechanised causal graphs can be inferred from interventional data.
Intuitively, by definition of a causal edge,
if one applies interventions to all nodes except one node $V$, and varying these interventions at only node $W$, then one can reliably discover whether there should be a causal edge from $W$ to $V$ (even in cyclic SCMs).
This \emph{leave-one-out} strategy%
\footnote{\emph{Leave-one-out} is not necessarily the most efficient procedure. Since we restrict to an acylic object-level subgraph, a more efficient standard (acyclic) causal discovery algorithm could replace the leave-one-out strategy described here (see, e.g., \citealp{eberhardt2012number}). For the mechanism subgraph, a more efficient cyclic causal discovery algorithm could be used, (e.g., \citealp{forre2018constraint}). There are usually tradeoffs between speed and assumptions required by these algorithms, however.}
is described below:

\begin{algorithm}
    \caption*{\emph{Leave-one-out causal discovery}
    }\label{alg:loo}
    \begin{algorithmic}[1]
    \Input 
    Interventional distributions $\mathcal{I} =\{P(\evars \mid {\doo(\bm{Y}=\bm{y}}))\}$ over variables $\bm{V}$
    \State $ E \gets \varnothing$
    \For{$\evar \in {\evars}$}\label{alg:obj-loop-start}
        \For{$\evaralt \in  \bm{V}\setminus \{\evar\}$}
        \State $\bm{Y} \gets \bm{V}\setminus\{\evar,\evaralt\}$
            \For{
            $\bm{y}\in\dom(\bm{Y})$ and $w, w' \in \dom(W)$
            }
                \If{
                $P(\evar \mid \doo(\bm{Y}=\bm{y},W=w)) \neq P(\evar \mid \doo(\bm{Y}=\bm{y},W=w'))$
                }
                    \State $E \gets E \cup (\evaralt,\evar)$
                    \State \textbf{break}
                \EndIf
            \EndFor
        \EndFor
    \EndFor\label{alg:obj-loop-end}
    \Output $(\bm{V}, E)$
    \end{algorithmic}
\end{algorithm}

\begin{lemma}[Leave-one-out causal discovery]
    Applied to the set of interventional distributions generated by a (potentially cyclic) SCM, \emph{Leave-one-out causal discovery} returns the correct causal graph.
\end{lemma}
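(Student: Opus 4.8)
The plan is to reduce the statement to a claim about each ordered pair of distinct endogenous variables, and then to verify both inclusions using a single structural observation. Since the algorithm outputs the node set $\bm{V}$ unchanged, and this is exactly the vertex set of the induced causal graph $G$, it suffices to show that for all distinct $V, W \in \bm{V}$ the test inside the inner loop fires for some choice of $\bm{y}, w, w'$ if and only if $W \to V$ is an edge of $G$. The structural observation is this: under the leave-one-out intervention $\doo(\bm{Y}=\bm{y}, W=w)$ with $\bm{Y}=\bm{V}\setminus\{V,W\}$, every endogenous variable except $V$ is held at a constant, and since self-loops are forbidden the structural equation for $V$ collapses to $V = f^V(\bm{y}, w, \exovar^V)$; hence $P(V \mid \doo(\bm{Y}=\bm{y},W=w))$ is precisely the law of $f^V(\bm{y},w,\exovar^V)$, i.e.\ the pushforward of the exogenous law $P(\exovar^V)$ along $u \mapsto f^V(\bm{y},w,u)$. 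Two points make this robust: exogenous distributions are unaffected by interventions on endogenous variables, and clamping every endogenous variable other than $V$ renders any shared exogenous parents irrelevant, so only the marginal $P(\exovar^V)$ enters. Crucially, this also shows the argument is insensitive to cyclicity of the original SCM: the post-intervention submodel seen by $V$ is one equation in one unknown with no self-dependence, hence trivially solvable regardless of whether the full SCM is acyclic. This is precisely what the leave-one-out design buys us.

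Soundness (no false positives) is then immediate and holds under any reasonable reading of \cref{def:cg}: if the test fires, so that $P(V \mid \doo(\bm{Y}=\bm{y},W=w)) \neq P(V \mid \doo(\bm{Y}=\bm{y},W=w'))$, the two pushforwards differ, so the maps $f^V(\bm{y},w,\cdot)$ and $f^V(\bm{y},w',\cdot)$ differ, i.e.\ $f^V$ genuinely depends on its $W$-argument and $W \to V \in G$. For completeness (no false negatives) I would argue directly: if $W \to V \in G$ then $f^V$ depends on $W$, so there are values $\bm{y}^* \in \dom(\bm{Y})$ and $w \neq w' \in \dom(W)$ witnessing the dependence; the exhaustive loops over $\dom(\bm{Y})$ and $\dom(W)$ will reach such a triple, and it remains to see that the witnessing difference in $f^V$ actually manifests as a difference of the corresponding pushforward distributions. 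This last inference is the step I expect to be the main obstacle: a difference of functions need not survive a pushforward if the exogenous law puts no mass on the witnessing noise value, or if distinct noise values cancel. The clean resolution is to use the standing convention that $\dom(\exovar^V)$ is the (full) support of $\exovar^V$, together with reading \cref{def:cg} so that $W$ being a parent of $V$ means $f^V$ admits no representation omitting $W$ that preserves all interventional distributions -- equivalently, to run the entire argument at the level of induced distributions rather than raw functions. Under that reading the witnessing difference is detectable, the test fires, and the edge $(W,V)$ is recorded.

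In summary, the only genuinely non-routine part is pinning down the equivalence between ``$f^V$ depends on $W$'' as in \cref{def:cg} and ``some interventional distribution of $V$ changes when $W$ is varied with all other endogenous variables clamped''; once that is made precise (via full-support exogenous variables, or minimality of the parent sets), both directions reduce to the two-line pushforward arguments above, the exhaustive inner loops discharge the ``for some $\bm{y}, w, w'$'' quantifier, and correctness of the node set and robustness to cycles are immediate from the structural observation.
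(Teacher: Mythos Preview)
Your proposal is correct and takes the same approach as the paper---appealing directly to the definitions of SCM and causal graph---though the paper's own proof is literally the single sentence ``Immediate from the definitions of SCM and causal graph,'' whereas you carefully unpack the pushforward identity $P(V\mid\doo(\bm Y=\bm y,W=w))=\text{law of }f^V(\bm y,w,\exovar^V)$ and flag the functional-versus-distributional dependence subtlety. The paper does not address that subtlety explicitly; its parenthetical remark in \cref{def:cg} that the induced causal graphs are ``faithful by construction'' is presumably what licenses treating the two notions as equivalent, which is exactly the resolution you propose.
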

\begin{proof}
    Immediate from the definitions of SCM and causal graph, see \cref{sec:background}.
\end{proof}

\Cref{alg:loo-cd-xscm} applies \emph{Leave-one-out causal discovery} to the combined set of object-level and mechanism variables of a mechanised SCM, and then infers edge-labels using structural mechanism interventions on object-level children.

\begin{algorithm}
    \caption{\emph{Edge-labelled mechanised SCM discovery}
    }\label{alg:loo-cd-xscm}
    \begin{algorithmic}[1]
    \Input 
    Interventional distributions $\mathcal{I}=\{P(\xevars \mid {\doo(\bm{Y}=\bm{y}}))\}$ over variables $\xevars = \evars \cup \mecvars{\evars}$
    \State $ (\xevars, E) \gets \text{leave-one-out-causal-discovery}(\{P(\xevars \mid {\doo(\bm{Y}=\bm{y}}))\})$
    \State $ E^{\textrm{obj}} \gets \{(W, V)\in E: W, V \in \evars\}$ 
    \State $ E^{\textrm{mech}} \gets \{(W, V)\in E: W, V \in \mecvars{\evars}\}$
    \State $ E^{\textrm{func}} \gets \{(W, V)\in E: V \in \evars, W\in\mecvar{\evars}\}$
    \If{$E\not= E^{\textrm{obj}}\cup E^{\textrm{mech}} \cup E^{\textrm{func}}$  or  $\exists V: |\{(W, V)\in E^{\textrm{func}}: W\in \mecvar{\evars}\}|\not=1$}
    \State Error: graph is not a mechanised SCM
    \EndIf
    \State $ E^{\textrm{term}} \gets \varnothing$
    \For{$(\mecvar{W}, \mecvar{V}) \in E^{\textrm{mech}}$}\label{alg:mech-loop-start}
        \State $\mecvars{Y} \gets \mecvar{\evars}\setminus\{\mecvar{W},\mecvar{V}\}$
        \For{
        interventions $\mecvars{y}\cup\mecvar{\ch^W}$ that are structural for $\Ch^W$, and interventions $\mecvar{w}, \mecvar{w'}$ on $\mecvar{W}$} \label{alg:util-check-start}
            \If{$
                P(\mecvar{\evar} \mid \doo(\mecvars{Y}=\mecvars{y},\mecvar{\Ch^W}=\mecvar{\ch^W},\mecvar{W}=\mecvar{w})) \neq P(\mecvar{\evar} \mid \doo(\mecvars{Y}=\mecvars{y},\mecvar{\Ch^W}=\mecvar{\ch^W},\mecvar{W}=\mecvar{w'}))
            $ } \label{alg:obj-change-measured}
                \State $E^{\textrm{term}} \gets E^{\textrm{term}} \cup (\mecvar{W}, \mecvar{V})$
                \State \textbf{break}
                \label{alg:util-check-end}
            \EndIf
        \EndFor
        \For{
        interventions $\mecvars{y}\cup\mecvar{\ch^V}$ that are structural for $\Ch^V$, and interventions $\mecvar{w}, \mecvar{w'}$ on $\mecvar{W}$}  \label{alg:dec-check-start}
            \If{$
                P(\mecvar{\evar} \mid \doo(\mecvars{Y}=\mecvars{y},\mecvar{\Ch^V}=\mecvar{\ch^V},\mecvar{W}=\mecvar{w})) \neq P(\mecvar{\evar} \mid \doo(\mecvars{Y}=\mecvars{y},\mecvar{\Ch^V}=\mecvar{\ch^V},\mecvar{W}=\mecvar{w'}))
            $ } \label{alg:dec-change-measured}
                \State $E^{\textrm{term}} \gets E^{\textrm{term}} \setminus (\mecvar{W}, \mecvar{V})$
                \State \textbf{break}
                \label{alg:dec-check-end}
            \EndIf
        \EndFor
    \EndFor\label{alg:mech-loop-end}
    \Output $(\evars \cup \mecvars{\evars},\, E^{\textrm{obj}} \cup E^{\textrm{mech}}\cup E^{\textrm{func}},\, E^{\textrm{term}})$
    \end{algorithmic}
\end{algorithm}

\begin{lemma}[Discovery of mechanised SCM]
    Applied to the set of interventional distributions generated by a mechanised SCM in which structural mechanism interventions are available for all nodes, \cref{alg:loo-cd-xscm} returns the correct edge-labelled mechanised causal graph.
\end{lemma}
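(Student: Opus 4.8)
The plan is to follow the algorithm in execution order, invoking the two earlier lemmas and unfolding \cref{def:mechanised-scm,def:edge-labelled-scm}. \textbf{Recovering the unlabelled graph.} The algorithm begins by running \emph{Leave-one-out causal discovery} on the whole variable set $\xevars = \evars \cup \mecvars{\evars}$, so by the Leave-one-out causal discovery lemma this returns the true induced causal graph $(\xevars, E)$ of the generating mechanised SCM. By \cref{def:mechanised-scm} together with the accompanying classification of edges, every edge of a mechanised SCM's graph is of exactly one of the three types --- object-level, mechanism, or functional --- and every object-level variable has exactly one functional parent; hence the subsequent sorting of $E$ into $E^{\mathrm{obj}}, E^{\mathrm{mech}}, E^{\mathrm{func}}$ is exhaustive and unambiguous, neither disjunct of the error test is satisfied, and the algorithm correctly outputs the underlying mechanised causal graph.

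\textbf{Recovering the terminal labels.} Here I would fix a mechanism edge $\mecvar{W}\to\mecvar{V}$ in $E^{\mathrm{mech}}$ and show that the two inner loops place it in $E^{\mathrm{term}}$ exactly when conditions~1 and~2 of \cref{def:edge-labelled-scm} both hold. The hypothesis that structural mechanism interventions are available for every node is precisely what ensures that all interventional distributions queried inside the loops belong to $\mathcal{I}$, so the searches are well defined. The first inner loop searches all available structural mechanism interventions on $\Ch^W$ (paired with arbitrary interventions on the other mechanism variables) and all pairs of values of $\mecvar{W}$, adding $\mecvar{W}\to\mecvar{V}$ to $E^{\mathrm{term}}$ as soon as one such context changes the interventional distribution of $\mecvar{V}$; reading ``$\mecvar{V}$ responds to $\mecvar{W}$'' in the distributional sense used throughout, this is exactly condition~1. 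The second inner loop does the same but cutting $\Ch^V$ structurally, and \emph{removes} $\mecvar{W}\to\mecvar{V}$ as soon as it finds a context in which $\mecvar{V}$ still responds to $\mecvar{W}$ --- i.e.\ it removes the edge exactly when condition~2 \emph{fails}. Since the first loop only adds, the second only removes (and removing a not-yet-added edge is a harmless no-op), and each loop exits at its first witness, the net membership of $\mecvar{W}\to\mecvar{V}$ in $E^{\mathrm{term}}$ is ``condition~1 and condition~2'', matching \cref{def:edge-labelled-scm}. Ranging over all mechanism edges gives that $E^{\mathrm{term}}$ is the true terminal-edge set, so the output is the correct edge-labelled mechanised causal graph.

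\textbf{Expected main obstacle.} The invocations of the Leave-one-out lemma and the unfolding of \cref{def:mechanised-scm} are routine. The delicate step is the second one: verifying that the algorithm's explicit existential search over structural interventions is a \emph{faithful} transcription of the informally phrased ``responds / does not respond'' clauses of \cref{def:edge-labelled-scm} --- in particular, pinning down the quantifier structure (an existential witness for condition~1, its universal negation for condition~2) and handling the boundary cases in which $W$ or $V$ itself lies in the cut set $\Ch^W$ or $\Ch^V$, or in which $\Ch^W$ (resp.\ $\Ch^V$) overlaps the context variables, so that the conditioning events and the measured marginal on $\mecvar{V}$ are guaranteed compatible and the correspondence is exact rather than merely morally so.
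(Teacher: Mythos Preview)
Your proposal is correct and takes essentially the same approach as the paper: the paper's own proof is the single sentence ``The algorithm checks the conditions in \cref{def:mechanised-scm,def:edge-labelled-scm},'' and your argument is a careful unpacking of exactly that claim. Your added discussion of the quantifier structure and boundary cases goes beyond what the paper supplies, but it is consistent with the intended reading and does not diverge in method.
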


\begin{proof}
    The algorithm checks the conditions in \cref{def:mechanised-scm,def:edge-labelled-scm}.
\end{proof}

Applied to the mouse example of \cref{fig:mouse}, \cref{alg:loo-cd-xscm} would take interventional data from the system and draw the edge-labelled mechanised causal graph in \cref{subfig:mouse-xcg-model}. 
For example, the edge $(\mecvar{\utilvar}, \mecvar{\decisionvar})$ will be discovered because the mouse's decision rule will change in response to a change in the distribution for cheese location.

\subsection{Discovering game graphs}
\label{sec:discover-game-graphs}

To discover agents,we can convert an edge-labelled mechanised causal graph into a game graph as specified by  \cref{alg:xcg-to-macid}:
decision nodes are identified by their mechanisms having incoming terminal edges (Line~\ref{alg:dec-set}), while utility nodes are identified by their mechanisms having outgoing terminal edges (Line~\ref{alg:util-set}).
Decisions and utilities that are in the same connected component in the terminal edge graph receive the same colouring, which is distinct from the other components. On Line~\ref{alg:connected-cololur}, $\textrm{Connected}_{term}(V)$ is the set $\{ W \in \evars | \mecvar{W} - - \ \mecvar{V} \}$ where the undirected path $\mecvar{W} - - \  \mecvar{V}$ is in the terminal edge graph. This set could be found by a search algorithm, such as breadth-first search.

\begin{algorithm}
\caption{\emph{Agency Identification}. Converts edge-labelled mechanised causal graph to game graph}\label{alg:xcg-to-macid}
\begin{algorithmic}[1]
\Input An edge-labelled mechanised causal graph $\scg= (\xevars, E)$, with nodes $\xevars = \evars \cup \mecvars{\evars}$ and edges $ E = E^{\textrm{obj}} \cup E^{\textrm{func}}\cup E^{\textrm{mech}}$, with $E^{\textrm{term}} \subseteq E^{\textrm{mech}} $.
\State $\decisionvars \gets \varnothing$ 
\State $\utilvars \gets \varnothing$ 
\For{$(\mecvar{W}, \mecvar{\evar}) \in E^{\textrm{term}}$}\label{alg:decsion-id-loop-start} 
        \State $\decisionvars \gets \decisionvars \cup \{{\evar}\}$ \label{alg:dec-set}
        \State $\utilvars\gets \utilvars\cup \{W\}$ \label{alg:util-set}
\EndFor  
\State $\structvars \gets \evars \setminus \{\utilvars  \cup \decisionvars\}$
\State  $\textrm{Coloured} \gets \varnothing$
\For{$\evar \in \utilvars \cup \decisionvars$}
    \If{$\evar \not \in \textrm{Coloured}$}
        \State $\textrm{Connected}_{term}(V) \gets \textrm{NewColour}$ \label{alg:connected-cololur}
        \State $\textrm{Coloured} \gets \textrm{Coloured} \cup \textrm{Connected}_{term}(V)$
    \EndIf
\EndFor 
\Output game graph $\mascid = (N, \evars, E)$, where
$\evars= \structvars \cup \utilvars  \cup \decisionvars$ and $E=E^{\textrm{obj}}$.
\end{algorithmic}
\end{algorithm}

\Cref{alg:xcg-to-macid} will identify any decision node $D$ under the following conditions (though it may generate false positives):
\begin{itemize}
    \item A utility node $U$, or a mediator node $X$ on a directed path from $D$ to $U$, is included in $\evars$.
    \item
    The utility/mediator node must be sufficiently important to the agent controlling $D$ that its mechanism shapes the agents behaviour.
    \item Mechanism interventions are available that change the agent's optimal policy for controlling $U$ (or $X$).
    \item These mechanism interventions are operationalised in a way that the agent's policy can respond to the changes they imply.
\end{itemize}

Under the following stronger assumptions\footnote{We consider examples of breaking the first of these assumptions in \cref{sec:breaking-assumptions}.}, \cref{alg:xcg-to-macid} is guaranteed to produce a fully correct game graph (without false positives).
These assumptions are most easily stated using mechanised SCMs with labelled decision and utility nodes.
Following \citet{hammond2021reasoning}, we call such objects \emph{mechanised games}.

For our first assumption, the following definition will be helpful.
\begin{definition}
\label{def:agent-subgraph}
    For a game graph, $\macid$, we define the \emph{agent subgraph} to be the graph $\macid^A = (\decisionvars^A \cup \utilvars^A, E^A)$, where the edge $(D, U)$ belongs to $E^A$ if and only if there is a directed path $D \pathto U \in \macid$ that doesn't pass through any $U' \in \utilvars^A \setminus \{U\}$.
    We define the \emph{decision-utility subgraph} to be the graph $\macid^{\decisionvars\utilvars} = (\decisionvars \cup \utilvars, \cup_A E^A)$.
\end{definition}

For example, the decision-utility subgraph of \cref{subfig:mouse-mascid-real} consists of two nodes, $D$ and $U$, and an edge $(D, U)$ as there is a directed path $D$ to $U$ that is not mediated by other utility nodes.
One further piece of terminology we use is that a DAG is weakly connected if replacing all of its directed edges with undirected edges produces a connected graph, i.e. one in which every pair of vertices is connected by some path. 
A weakly connected component is a maximal subgraph such that all nodes are weakly connected.
For example, the decision-utility subgraph of \cref{subfig:mouse-mascid-real} is connected, and consists of a single connected component (the agent subgraph for the mouse).

Our first assumption uses these definitions as follows:
\begin{assumption}
    \label{assumption:ea-components-agents}
    Each weakly connected component of the decision-utility subgraph is an agent subgraph, and contains at least one decision and one utility node.
\end{assumption}
The intuition behind this assumption is that if there was a disconnected component in the agent subgraph, then the decisions in that component could be reasoned about independently from the rest of the decisions, and there would be no way to experimentally distinguish whether those independent decisions were made by a separate agent. So we make this as a simplifying assumption that only separate agents reason about their decisions independently.
An example of a game ruled out by this assumption is \cref{fig:ndu}, in which a decision doesn't directly cause it's utility.

\begin{assumption}\label{assumption:opt-decision-rule}
   For any set of mechanism interventions, every agent optimises expected utility (plays best response) in every decision context, i.e.\ agents play a subgame perfect equilibrium.
\end{assumption}

\cref{assumption:opt-decision-rule} implies that mechanism interventions are operationalised in a way that agents can appropriately respond to them, that agents are trying to optimise their utility nodes, and that object-level links going into the decision can be interpreted as information links (since agents adapt appropriately to the outcomes of the decision parents).

\begin{assumption}
\label{assumption:preferred-order}
Agents have a preferred ordering over decision rules, so that if two or more decision rules obtain the same (optimal) expected utility in all decision contexts, the agent will always pick the first optimal decision rule according to the order.
\end{assumption}

This ensures no unmotivated switches occur -- so that agents don't switch decision rule in response to mechanism interventions which have no effect on the optimality of that decision rule.

\begin{assumption}
\label{assumption:non-dec-term-parentless}
   Only decision nodes, $D \in \decisionvars$, have mechanisms, $\mecvar{D}$, with ingoing terminal edges.
\end{assumption}

This is a weak form of the popular \emph{independent causal mechanism} assumption \citep{scholkopf2012causal}, discussed further in \cref{sec:causal-discovery-review}, preventing dependencies between certain mechanisms.

\begin{assumption}
\label{assumption:mechanism-richness}
For each node $V$, interventions on $\mecvar{V}$ can instantiate any deterministic function relating $V$ to its parents (when $V$ lacks parents, it can be set to any constant value in $\dom(V)$).
\end{assumption}

This is to ensure that we can enact the necessary soft interventions, in a way that the agent is aware of. We are now ready to establish a correctness result for \cref{alg:xcg-to-macid}.

\begin{theorem}[Correctness of \cref{alg:loo-cd-xscm,alg:xcg-to-macid}]
\label{thm:correctness-algo}
    Let $\mec{\mascim}_{\textrm{real}}$ be a mechanised causal game satisfying \cref{assumption:ea-components-agents,assumption:opt-decision-rule,assumption:preferred-order,assumption:non-dec-term-parentless,assumption:mechanism-richness}. 
    Let  $\macid_{\textrm{model}}$ be the game graph resulting from applying \cref{alg:loo-cd-xscm} followed by \cref{alg:xcg-to-macid} to $\mec{\mascim}_{\textrm{real}}$. Then $\macid_{\textrm{model}} = \macid_{\textrm{real}}$.
\end{theorem}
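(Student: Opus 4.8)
The plan is to show that the two algorithms, composed, recover exactly the real game graph by tracking what happens at the level of the intermediate edge-labelled mechanised causal graph. By the ``Discovery of mechanised SCM'' lemma, \cref{alg:loo-cd-xscm} applied to $\mec{\mascim}_{\textrm{real}}$ returns the correct edge-labelled mechanised causal graph $\xscg_{\textrm{real}}$ of $\mec{\mascim}_{\textrm{real}}$ (using \cref{assumption:mechanism-richness} to guarantee the required structural mechanism interventions exist). So it suffices to prove that \cref{alg:xcg-to-macid} applied to $\xscg_{\textrm{real}}$ returns $\macid_{\textrm{real}}$. Since \cref{alg:xcg-to-macid} only ever reads off the object-level edges $E^{\textrm{obj}}$ (which already match $\macid_{\textrm{real}}$ by construction of the mechanised game) and otherwise just partitions $\evars$ into $\structvars,\decisionvars,\utilvars$ and assigns agent-colours, the entire content of the theorem reduces to two claims: (i) the sets $\decisionvars,\utilvars$ identified by the terminal-edge criterion coincide with the true decision and utility nodes of $\mec{\mascim}_{\textrm{real}}$; and (ii) the connected-components-of-the-terminal-graph colouring coincides with the true assignment of decision/utility nodes to agents.

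For (i), I would argue both inclusions. First, every true utility node $U$ has an outgoing terminal edge, and every true decision node $D$ has an incoming terminal edge: take a true decision $D$ and, by \cref{assumption:ea-components-agents}, a utility node $U$ in its agent subgraph reachable by a directed path; by \cref{assumption:mechanism-richness} one can vary $\mecvar{U}$ so as to change the agent's optimal policy at $D$, and by \cref{assumption:opt-decision-rule} the agent actually switches, so $\mecvar{U}\to\mecvar{D}$ is present and satisfies condition~1 of \cref{def:edge-labelled-scm}; condition~2 holds because cutting $U$'s children (here there are none on the relevant path, or the argument is applied to the last utility before $U$) destroys the dependence. Conversely, \cref{assumption:non-dec-term-parentless} directly gives that only true decision nodes have mechanisms with incoming terminal edges, so no chance node is misclassified as a decision; and I would use the same assumption together with the structure of terminal edges (a terminal edge points into a decision mechanism) to rule out chance nodes being misclassified as utilities — a source of a terminal edge must have its target be a decision node, and condition~2 of \cref{def:edge-labelled-scm} forces the source to be a node the downstream decision genuinely optimises, i.e.\ a utility node of that agent. \cref{assumption:preferred-order} is needed here to guarantee that the dependence of $\mecvar{D}$ on $\mecvar{U}$ is not washed out (the agent does not make an unmotivated switch that accidentally creates or destroys an edge) and that condition~2's ``does not respond'' is genuinely an absence of dependence rather than a coincidental cancellation.

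For (ii), I would show the undirected terminal-edge graph on $\decisionvars\cup\utilvars$ has the same connected components as the partition-into-agents induced by $\macid_{\textrm{real}}$. Within a single agent $A$: by \cref{assumption:ea-components-agents} the agent subgraph $\macid_{\textrm{real}}^A$ is weakly connected, and the argument in (i) produces a terminal edge $\mecvar{U}\to\mecvar{D}$ whenever $U\in\utilvars^A$ lies on a directed path from $D\in\decisionvars^A$ not through another utility of $A$ — i.e.\ exactly along the edges of $\macid_{\textrm{real}}^A$ — so the terminal graph restricted to $A$'s nodes is weakly connected. Across agents: a terminal edge $\mecvar{W}\to\mecvar{D}$ requires the agent controlling $D$ to adapt to $\mecvar{W}$, and an agent only optimises its own utilities, so $W$ must be a utility of that same agent; hence no terminal edge crosses between agents, and the components are exactly the agents. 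Therefore $\textrm{Connected}_{term}(V)$ in \cref{alg:xcg-to-macid} enumerates precisely the nodes of one agent, and the colouring matches.

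\textbf{Main obstacle.} The delicate part is the ``no false positives'' direction of (i): ruling out that a purely instrumental chance node's mechanism has an incoming terminal edge, and that a non-utility node has an outgoing terminal edge. The first is handed to us cleanly by \cref{assumption:non-dec-term-parentless}, but the second requires a genuine argument that condition~2 of \cref{def:edge-labelled-scm} — ``$\mecvar{V}$ stops mattering once $V$'s children are cut'' — is exactly the signature of a true utility node and not, say, a mediator that happens to be the terminal node of every directed path to a utility. I expect to need \cref{assumption:ea-components-agents} (to know the relevant directed path into a utility exists and that utilities genuinely sit at path-ends within an agent) together with \cref{assumption:opt-decision-rule} and \cref{assumption:preferred-order} (so that ``responds / does not respond'' tracks true changes in the optimal policy without spurious switches or spurious non-switches), and carefully chaining these through the definition of the agent subgraph is where most of the real work lies.
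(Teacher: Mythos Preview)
Your overall decomposition matches the paper's proof: establish that (a) the object-level edges agree, (b) decisions are exactly the nodes with incoming terminal mechanism edges, (c) utilities are exactly the nodes with outgoing terminal mechanism edges, and (d) the terminal-edge components recover the agent colouring. The use of the assumptions is also allocated correctly at a high level.

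There are, however, two concrete gaps. First, you assert that $E^{\textrm{obj}}$ ``already matches $\macid_{\textrm{real}}$ by construction,'' but this is not automatic: information links into a decision $D$ only appear as causal edges under leave-one-out discovery if the imputed decision rule actually depends on those parents. The paper uses \cref{assumption:mechanism-richness} to intervene $\mecvar{D}$ to a rule depending on all of $\Pa^D$, which is what guarantees every information link is detected as a causal edge; your sketch skips this step.

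Second, and more seriously, you have the two conditions of \cref{def:edge-labelled-scm} swapped in the places where it matters. For a terminal edge $\mecvar{W}\to\mecvar{V}$, condition~1 cuts the \emph{source's} children $\Ch^W$ and requires the response to \emph{persist}; condition~2 cuts the \emph{target's} children $\Ch^V$ and requires the response to \emph{disappear}. In your forward argument you write ``condition~2 holds because cutting $U$'s children \dots destroys the dependence,'' but cutting $U$'s children is condition~1's setup, and there the dependence must survive, not be destroyed; the paper verifies condition~2 by cutting $\Ch^D$ and invoking \cref{assumption:preferred-order} (no unmotivated switches when $D$ has no downstream effect). Likewise, your ``no false utilities'' argument appeals to condition~2, but the paper's argument uses condition~1: if $W\notin\utilvars$ and $\Ch^W$ are structurally cut, then $\mecvar{W}$ cannot affect expected utility at all (since $W$ is neither a utility nor has any downstream effect), so by \cref{assumption:opt-decision-rule,assumption:preferred-order} no decision mechanism responds, condition~1 fails, and $(\mecvar{W},\mecvar{V})\notin E^{\textrm{term}}$. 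As written, your invocation of condition~2 does not establish this. The forward direction also needs the explicit construction the paper gives (use \cref{assumption:mechanism-richness} on \emph{all} mediators along the $D\pathto U$ path to copy $D$'s value to $U$), not merely ``vary $\mecvar{U}$,'' since varying $\mecvar{U}$ alone need not change the optimal policy under arbitrary settings of the other mechanisms.
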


\begin{proof}
    We establish that the algorithm infers the correct object-level causal structure, the correct labelling of decision and utility nodes (and hence of chance nodes), and the correct colouring of the same.
    
    {\bf{Causal structure}}
    The only structural difference between a game and an SCM is the presence of information links in the game. By \cref{assumption:mechanism-richness}, we can impute an arbitrary decision rule to any decision, that makes it depend on all its observations. Thereby all information links are causal links.
    
    {\bf{Decision:}} 
    We first show that all and only decisions get mapped to decisions.
    Let $\decisionvar \in \decisionvars^A$ be a decision variable for agent $A$ in $\mec{\mascim}_{\textrm{real}}$. 
    By \cref{assumption:ea-components-agents} we have that there exists a utility variable $\utilvar \in \utilvars^A$ such that there's a directed path, $p$, from $D$  to $U$ not passing through any other utility node of $A$.
    By means of mechanism interventions, we can ensure that $U$ is either 0 or 1 depending on the value of $D$ by copying the value of $D$ along $p$, using deterministic functions (\cref{assumption:mechanism-richness}).
    All other nodes ignore $D$.
    Agent $A$ chooses a decision rule setting $U$ to 1 (\cref{assumption:opt-decision-rule}).
    If we do a mechanism intervention to invert the function governing $U$, and cut off all of its effects on its children, then agent $A$ will choose a different decision rule and Lines~\ref{alg:util-check-start}-\ref{alg:util-check-end} will add edge $(\mecvar{U}, \mecvar{D})$ to $E^{\textrm{term}}$.
    Further, no mechanism intervention on the function governing $U$ will cause agent $A$ to choose a different decision rule if we intervene to cut the effect of $D$ on its children as all decision rules would have the same expected utility (and \cref{assumption:preferred-order} rules out unmotivated switches). Thus, Lines~\ref{alg:dec-check-start}-\ref{alg:dec-check-end} will not remove $(\mecvar{U}, \mecvar{D})$ from $E^{\textrm{term}}$. \cref{alg:xcg-to-macid} then correctly identifies $D$ as a decision.
    
    Conversely, assume $\evar\in \evars\setminus \decisionvars$ is a non-decision. It may be that Lines~\ref{alg:util-check-start}-\ref{alg:util-check-end} will add $(\mecvar{W}, \mecvar{V})$ to $E^{\textrm{term}}$, for some $W \in \evars\setminus\{V\}$. But Lines~\ref{alg:dec-check-start}-\ref{alg:dec-check-end} will remove $(\mecvar{W}, \mecvar{V})$ from $E^{\textrm{term}}$ by \cref{assumption:non-dec-term-parentless}, and \cref{alg:xcg-to-macid} then doesn't identify $V$ as a decision.
    
    {\bf{Utility:}} 
    We next show that all and only utilities get mapped to utilities.
    Let $\utilvar \in \utilvars^A$ be a utility variable for agent $A$ in $\mec{\mascim}_{\textrm{real}}$. By \cref{assumption:ea-components-agents} we have that there exists a decision variable $\decisionvar \in \decisionvars^A$ such that there's a directed path, $p$, from $D$  to $U$ not passing through any other utility node of $A$.
    By the same construction as for decision nodes above, \cref{alg:loo-cd-xscm} will discover a terminal mechanism edge $(\mecvar{U}, \mecvar{D})$.
    Therefore \cref{alg:xcg-to-macid} identifies $U$ to be a utility as desired.
    
    Conversely, consider a non-utility node, $W\not \in \utilvars$, and some other node, $V \in \evars\setminus\{W\}$, with structural interventions cutting off $\Ch^W$ and interventions on all mechanisms except $\mecvar{V}$. Suppose, for contradiction, there exists a terminal edge $(\mecvar{W}, \mecvar{V})$. By \cref{assumption:non-dec-term-parentless}, there will be a terminal edge $(\mecvar{W}, \mecvar{V})$ only if $V$ is a decision. Further, by \cref{assumption:preferred-order,assumption:opt-decision-rule} the expected utility must be affected by the change in $\mecvar{W}$. But since we've intervened on all mechanisms except $\mecvar{V}$, the only effect  $\mecvar{W}$ can have on the expected utility is via $W$. But $W \not \in \utilvars$, and $\Ch^W$ are not affected (since they've been cut off), so $\mecvar{W}$ cannot affect expected utility.
    Therefore, only utility variables get outgoing edges in $E^{\textrm{term}}$ from \cref{alg:loo-cd-xscm}, and \cref{alg:xcg-to-macid} does not assign $\evar$ to be a utility.
    
    We have thus shown that all and only decisions nodes get mapped to decisions, and similarly for utilities. All that are left are chance nodes, and these must be mapped to chance nodes (since only decisions/utilities get mapped to decisions/utilities).
    
    {\bf{Colouring:}} 
    By \cref{assumption:ea-components-agents} for any agent, $A$, and for any decision $D \in D^A$, there exists $U \in U^A$ with $(D,U) \in E^A$. By the above paragraphs, we must have that $E^{\textrm{term}}$ contains the edge $(\mecvar{U}, \mecvar{D})$, and further, by the converse arguments, the only edges in $E^{\textrm{term}}$ are of the form $(\mecvar{U}, \mecvar{D})$ with $D \in D^A, U \in U^A$ and $(D,U) \in E^A$ for some $A$, which means $E^{\textrm{term}}$ is a disjoint union of $\mecvar{E}^A$, in which each edge of $\mecvar{E}^A$ is the reverse of an edge in $E^A$. By \cref{assumption:ea-components-agents}, the weakly connected components of $G^{\decisionvars\utilvars}$ are the $G^A$, and so the $\mecvar{E}^A$ are each weakly connected, and disconnected from each other. The colouring of \cref{alg:xcg-to-macid} colours each vertex of a connected component the same colour, and distinctly to all other components, and thus is correct. 
\end{proof}

\subsection{Mechanism Identification Procedure}
\label{sec:mech-id}
In the last section we demonstrated an algorithm that, when applied after a causal discovery algorithm, can identify the underlying game graph of a system.
In this section we will show the converse, that if one already has a game graph, one can convert it into an edge-labelled mechanised causal graph. The interpretation is that the same underlying system can equivalently be represented either as an edge-labelled mechanised causal graph, which is a physical representation of the system, or as a game graph, which is a decision-theoretic representation of the system.

We first prove a Lemma relating the mechanism causal graph produced by \cref{alg:loo-cd-xscm} to \emph{strategic relevance} \citep{Koller2003-yh}, which captures which other decision rules are relevant for optimising the decision rule at $\decisionvar$. \citeauthor{Koller2003-yh} give a sound and complete graphical criterion for strategic relevance, called \emph{s-reachability}\footnote{Our definition here generalises the definition from \cite{Koller2003-yh} to include non-decision variables as being s-reachable, following \cite{hammond2021equilibrium}.}, where $\evar \neq \decisionvar$ is \emph{s-reachable} from $\decisionvar \in \decisionvars^A$, for agent $A$, if, in a modified game graph $\hat{\macid}$ with a new parent $\hat{\evar}$ added to $\evar$, we have $\hat{\evar} \not \perp_{\hat{\mascid}} \utilvars^\decisionvar \mid \Fa^\decisionvar $, where $\utilvars^\decisionvar$ is the set of utilities for agent $A$ that are descendants of $\decisionvar$ (i.e. $\utilvars^\decisionvar = \utilvars^A\cap \Desc^\decisionvar$ for $\decisionvar \in \decisionvars^A$) and $\not \perp$ denotes d-connection \citep{pearl2009causality}.
In the game graph in \cref{subfig:mouse-mascid-real}, both $X$ and $U$ are s-reachable from $\decisionvar$.

\begin{lemma}
\label{lem:s-reach-implies-mech-parent}
    Let $\mec{\mascim}$ be a mechanised causal game satisfying \cref{assumption:ea-components-agents,assumption:opt-decision-rule,assumption:preferred-order,assumption:non-dec-term-parentless,assumption:mechanism-richness}, containing an agent, $A$, with decision variables $\decisionvars^A$ and utility variables $\utilvars^A$, and let $\xscg$ be the mechanised causal graph with edges $E^{\textrm{obj}} \cup E^{\textrm{func}} \cup E^{\textrm{mech}}$, and $E^{\textrm{term}} \subseteq E^{\textrm{mech}}$, which results from applying \cref{alg:loo-cd-xscm} to $\mec{\mascim}$.
    Then
    \begin{enumerate}
        \item For $\decisionvar \in \decisionvars^A$, that the node $Y \in \evars\setminus \decisionvar$ is s-reachable from $D$ is a necessary and sufficient condition for $(\mecvar{Y},{\mecvar{\decisionvar}}) \in E^{\textrm{mech}}$ (this places no restriction on $(\mecvar{Y},{\mecvar{W}}) \in E^{\textrm{mech}}$ for $W \not\in\decisionvars$).
        \item Further, for $Y\in \utilvars^A$, that the existence of a directed path $D \pathto Y$ not through another $U' \in \utilvars^A\setminus \{Y\}$ is a necessary and sufficient condition for $(\mecvar{Y},{\mecvar{\decisionvar}}) \in E^{\textrm{term}}$.
    \end{enumerate}
\end{lemma}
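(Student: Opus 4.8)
The plan is to handle the two parts separately, reducing part~1 to the soundness and completeness of the s-reachability criterion for strategic relevance, and part~2 to the explicit mechanism configurations already used in the proof of \cref{thm:correctness-algo}.

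For part~1, the key preliminary observation is that, since $\mec{\mascim}$ is a mechanised game satisfying \cref{assumption:opt-decision-rule,assumption:preferred-order}, for $\decisionvar \in \decisionvars^A$ the value taken by $\mecvar{\decisionvar}$ in the induced SCM is exactly agent $A$'s decision rule at $\decisionvar$, and it is a deterministic function of the remaining mechanisms --- namely the first (in $A$'s preferred order) optimal decision rule at $\decisionvar$ given those mechanisms. By the leave-one-out construction of \cref{alg:loo-cd-xscm}, $(\mecvar Y, \mecvar{\decisionvar}) \in E^{\textrm{mech}}$ iff, holding every mechanism other than $\mecvar Y$ and $\mecvar{\decisionvar}$ fixed by intervention and varying only $\mecvar Y$, the value of $\mecvar{\decisionvar}$ changes; \cref{assumption:mechanism-richness} ensures these interventions range over all admissible mechanism settings. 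That statement is precisely that the mechanism of $Y$ is \emph{strategically relevant} to $\decisionvar$ in the sense of \citet{Koller2003-yh}, generalised to a non-decision target as in \citet{hammond2021equilibrium}. I would then invoke the known soundness and completeness of s-reachability for strategic relevance \citep{Koller2003-yh,hammond2021equilibrium}: $\mecvar Y$ is strategically relevant to $\decisionvar$ iff $Y$ is s-reachable from $\decisionvar$. Chaining the two equivalences yields part~1, and the parenthetical remark is then immediate: for $W \notin \decisionvars$ the mechanism $\mecvar W$ is not a decision rule and its mechanism-parents are not pinned down by the mechanised-game structure (\cref{assumption:non-dec-term-parentless} constrains only \emph{terminal} edges into non-decision mechanisms), so the characterisation says nothing about $(\mecvar Y, \mecvar W)$.

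For part~2, recall from \cref{def:edge-labelled-scm} (equivalently, Lines~\ref{alg:util-check-start}--\ref{alg:dec-check-end} of \cref{alg:loo-cd-xscm}) that $(\mecvar Y, \mecvar{\decisionvar}) \in E^{\textrm{term}}$ iff (i) there is a structural mechanism intervention on $\Ch^Y$, together with interventions on the remaining mechanisms, under which $\mecvar{\decisionvar}$ still responds to $\mecvar Y$, and (ii) there is \emph{no} structural mechanism intervention on $\Ch^{\decisionvar}$, together with interventions on the remaining mechanisms, under which $\mecvar{\decisionvar}$ responds to $\mecvar Y$. For the ``if'' direction, given a directed path $p \colon \decisionvar \pathto Y$ that avoids every $U' \in \utilvars^A \setminus \{Y\}$, I would reuse the construction from the decision and utility cases of the proof of \cref{thm:correctness-algo}: using \cref{assumption:mechanism-richness}, copy $\decisionvar$'s value along $p$ to the parent of $Y$ on $p$, make every off-path node ignore $\decisionvar$, and cut $\Ch^Y$; since $Y \in \utilvars^A$, a mechanism intervention inverting how $Y$ depends on its parents then changes $A$'s expected utility as a function of $\decisionvar$, so \cref{assumption:opt-decision-rule,assumption:preferred-order} force $A$ to switch its rule at $\decisionvar$, giving (i); and cutting $\Ch^{\decisionvar}$ makes $\decisionvar$ causally inert, so all of $A$'s rules at $\decisionvar$ tie and \cref{assumption:preferred-order} forbids an unmotivated switch, giving (ii).

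For the ``only if'' direction I would first note $(\mecvar Y, \mecvar{\decisionvar}) \in E^{\textrm{term}} \subseteq E^{\textrm{mech}}$, so by part~1 $Y$ is s-reachable from $\decisionvar$; moreover condition (i) --- $\mecvar{\decisionvar}$ responding to $\mecvar Y$ after $\Ch^Y$ is cut --- forces $\decisionvar$ to causally influence $Y$, because once $\Ch^Y$ is cut the only channel by which $\mecvar Y$ can move $A$'s expected utility is $Y$ itself (a utility of $A$); hence $Y \in \Desc^{\decisionvar}$. The remaining step --- strengthening ``$Y \in \Desc^{\decisionvar}$'' to ``there is a directed path $\decisionvar \pathto Y$ avoiding $\utilvars^A \setminus \{Y\}$'' --- is where I expect the main obstacle to lie. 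The plan is to argue by contradiction: if every $\decisionvar \pathto Y$ path passed through some $U' \in \utilvars^A$, then $(\decisionvar, Y) \notin E^A$, and I would invoke \cref{assumption:ea-components-agents} --- which makes the weakly connected components of the decision-utility subgraph coincide with the agent subgraphs $\macid^A$ carrying exactly the edge sets $E^A$ --- together with \cref{assumption:non-dec-term-parentless} and the structural-intervention requirement of condition (ii), to show that any dependence of $\mecvar{\decisionvar}$ on $\mecvar Y$ must be screened off by the mechanism of the intervening utility, so that (i) and (ii) cannot hold simultaneously. Making this screening-off argument precise --- and reconciling it with the corresponding claim asserted without full detail in the colouring step of the proof of \cref{thm:correctness-algo} --- is the crux.
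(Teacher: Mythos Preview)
Your proposal takes essentially the same route as the paper: part~1 is reduced to the soundness and completeness of s-reachability for strategic relevance due to \citet{Koller2003-yh} (the paper splits this into two directions, invoking their Lemmas~5.1--5.2 for necessity and the construction behind their Theorem~5.2 for sufficiency, while you package it as a single equivalence chain), and part~2 in both directions is handled by reusing the constructions and arguments from the proof of \cref{thm:correctness-algo}. The one place you go beyond the paper is part~2 necessity, where the paper simply cites the colouring step of \cref{thm:correctness-algo} to conclude $(\mecvar{Y},\mecvar{\decisionvar})\in E^{\textrm{term}} \Rightarrow (D,Y)\in E^A$, whereas you try to unpack that citation; you correctly flag the final screening-off step as the crux, though your sketched plan (combining \cref{assumption:ea-components-agents} with condition~(ii) of \cref{def:edge-labelled-scm}) is not more fleshed out than the paper's one-line reference---in particular, condition~(ii) concerns cutting $\Ch^{\decisionvar}$ rather than the intervening utility, so it is not immediately clear how it delivers the screening you need.
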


\begin{proof}

    Necessity of 1:
    We largely follow the soundness direction of \cite{Koller2003-yh}, Thm 5.1, with an extension to relate this to a mechanised causal graph discovered by \cref{alg:loo-cd-xscm}. The proof strategy is to suppose that $Y$ is \emph{not} s-reachable from $\decisionvar$, and show this implies $(\mecvar{Y}, {\mecvar{\decisionvar}}) \not\in E^{\textrm{mech}}$. 
    
    We perform the mechanism interventions, $\doo(\mecvars{\bm{\evaralt}}=\mecvars{w}, \mecvar{Y}=\mecvar{y})$ and $\doo(\mecvars{\bm{\evaralt}}=\mecvars{w}, \mecvar{Y}=\mecvar{y}')$.
    Since $\decisionvar$ is a decision variable, by Lemma 5.1 of \cite{Koller2003-yh} the optimal decision rule $\decisionrule^\decisionvar_{\mecvar{y}}(\pa^\decisionvar, \exovar^\decisionvar)$ under $\doo(\mecvars{\bm{\evaralt}}=\mecvars{w}, \mecvar{Y}=\mecvar{y})$ must be a solution of the following optimisation problem 
    \begin{align}
        \argmax_{\pi}\sum_{\decisionval \in \dom(\decisionvar)}\pi(d)
        \sum_{\utilval \in \dom(\utilvar^\decisionvar)}
        P(\utilval \mid \decisionval, \pa^\decisionvar, \doo(\mecvars{\bm{\evaralt}}=\mecvars{w}, \mecvar{Y}=\mecvar{y}))
        \cdot \utilval
    \end{align}
    and similarly for the decision rule $\decisionrule^\decisionvar_{\mecvar{y}'}$ 
    under $\doo(\mecvars{\bm{\evaralt}}=\mecvars{w}, \mecvar{Y}=\mecvar{y}')$.

    Now suppose that $Y$ is \emph{not} s-reachable from $\decisionvar$, then by Lemma 5.2 of \cite{Koller2003-yh}, we have that 
    $P(\utilval \mid \decisionval, \pa^\decisionvar, \doo(\mecvars{\bm{\evaralt}}=\mecvars{w}, \mecvar{Y}=\mecvar{y}))
    =
    P(\utilval \mid \decisionval, \pa^\decisionvar, \doo(\mecvars{\bm{\evaralt}}=\mecvars{w}, \mecvar{Y}=\mecvar{y}'))$, 
    and so the two optimization problems are the same. Since they are solutions of the same optimization problem, and by \cref{assumption:opt-decision-rule,assumption:preferred-order} the agents choose decision rules which make up subgame-perfect equilibrium, this leads to the same decision rule in each intervened game $\decisionrule^\decisionvar_{\mecvar{y}}(\pa^\decisionvar, \exovar^\decisionvar) = \decisionrule^\decisionvar_{\mecvar{y}'}(\pa^\decisionvar, \exovar^\decisionvar)$. 
    This holds for any $\mecvars{W},\mecvar{y},\mecvar{y}'$ and so \cref{alg:loo-cd-xscm} does not draw an edge, i.e. $(\mecvar{Y}, {\mecvar{\decisionvar}}) \not\in E^{\textrm{mech}}$, as was to be shown.
    
    Necessity of 2: 
    As argued in \cref{thm:correctness-algo} (colouring), $(\mecvar{Y},{\mecvar{\decisionvar}}) \in E^{\textrm{term}}$ implies $(D, Y) \in E^A$, which by \cref{def:agent-subgraph} means there exists a directed path $D \pathto Y$ not through another $U' \in \utilvars^A\setminus \{Y\}$.
  
    Sufficiency of 1: 
    We can use soft interventions on object-level variables to construct the same model as used in the existence proof for Theorem 5.2 of \cite{Koller2003-yh}. We note that the proof for Theorem 5.2 of \cite{Koller2003-yh} is written for another decision variable $\decisionvar'$ being s-reachable from $\decisionvar$. But the proof itself makes no use of the special nature of $\decisionvar'$ as a decision, rather than any other type of variable, and so it also applies to any variable $Y \in \evars\setminus\{\decisionvar\}$. 
    
    Suppose $Y$ is s-reachable from $\decisionvar$ in $\mec{\mascim}$.
    It follows from Theorem 5.2 of \cite{Koller2003-yh} that the optimal decision rule for $\decisionvar$ will be different under these mechanism interventions (i.e. this choice of causal game), when different mechanism interventions are applied to $Y$. Hence \cref{alg:loo-cd-xscm} will draw an edge $(\mecvar{Y}, {\mecvar{\decisionvar}}) \in E^{\textrm{mech}}$.

    Sufficiency of 2:
    By the arguments in \cref{thm:correctness-algo} (decision, utility) the existence of a directed path $D \pathto Y$ not through another $U' \in \utilvars^A\setminus \{Y\}$ means that $(\mecvar{Y},{\mecvar{\decisionvar}}) \in E^{\textrm{term}}$.

\end{proof}

\begin{algorithm}
\caption{\emph{Mechanism Identification}. Convert game graph to edge-labelled mechanised causal graph.}\label{alg:macid-to-xcg}
\begin{algorithmic}[1]
\Input game graph $\mascid = (N, \evars, E)$ 
\State $E^{\textrm{term}} \gets \varnothing$
\State $\mecvars{\evars} \gets \varnothing$
\For{$\evar \in \evars$}\label{alg:func-loop-start}
    \State ${E} \gets {E} \cup (\mecvar{\evar}, \evar)$
    \State $\mecvars{\evars} \gets \mecvars{\evars} \cup \textrm{Node}(\mecvar{\evar})$, 
\EndFor\label{alg:func-loop-end}
\State ${\xevars} \gets \evars \cup \mecvars{\evars}$, 

\For{$A \in N$} 
    \For{$\decisionvar \in \decisionvars^A$}\label{alg:sreach-loop-start}
        \For{$\evar \in \evars\setminus\{\decisionvar\}$}
            \State $\hat{\mascid}$ is ${\mascid}$ with a new parent $\hat{\evar}$ added to $\evar$
            \If{$\hat{\evar} \not \perp_{\hat{\mascid}} \utilvar^\decisionvar \mid \{\Pa^\decisionvar \cup \decisionvar\}$} \label{alg:macid-xcg-s-reach-start}
                \State ${E} \gets {E} \cup (\mecvar{\evar}, \mecvar{\decisionvar})$ \label{alg:macid-xcg-s-reach-end}
            \EndIf
            \If{$\exists$ directed path $D \pathto  \evar$ not through another $U'\in \utilvars^A\setminus\{\evar\} $} \label{alg:macid-xcg-term-start}
                \State $E^{\textrm{term}} \gets E^{\textrm{term}} \cup (\mecvar{\evar}, \mecvar{\decisionvar})$ \label{alg:macid-xcg-s-term-end}
            \EndIf
        \EndFor
    \EndFor
\EndFor\label{alg:sreach-loop-end}

\Output mechanised causal graph $\xscg= (\xevars, {E})$, $E^{\textrm{term}}$
\end{algorithmic}
\end{algorithm}

The conversion from game graph to mechanised causal graph is done by \cref{alg:macid-to-xcg}, \emph{Mechanism Identification}, which identifies mechanisms by converting a game graph into a mechanised causal graph. It first takes the game graph edges and on Lines~\ref{alg:func-loop-start}-\ref{alg:func-loop-end} adds the function edges. Lines~\ref{alg:sreach-loop-start}-\ref{alg:sreach-loop-end} then add the mechanism edges based on s-reachability: if a node $\evar$ is s-reachable from $\decisionvar$ in the game graph, then we include an edge $(\mecvar{\evar}, \mecvar{\decisionvar})$ in the mechanised causal graph.
Further, it adds a terminal edge when there's a directed path from one of an agent's decisions to one of its utilities, that doesn't pass through another of its utilities.
We now establish that \cref{alg:xcg-to-macid} and \cref{alg:macid-to-xcg} are inverse to each other. We will use the shorthand $a_i(x)$, for $i=1,2,3$ to refer to the result of algorithm $a_i$ on object $x$, where e.g. $x$ is a game graph.

\begin{theorem}[\cref{alg:xcg-to-macid} is a left inverse of \cref{alg:macid-to-xcg}]
\label{thm:a_2-a_3-mascid}
    Let $\mascid$ be a mechanised game graph satisfying \cref{assumption:ea-components-agents,assumption:opt-decision-rule,assumption:preferred-order,assumption:non-dec-term-parentless,assumption:mechanism-richness}, and let $\xscg$ be the mechanised causal graph resulting from applying \cref{alg:macid-to-xcg} to it. Then applying \cref{alg:xcg-to-macid} on $\xscg$ reproduces $\mascid$. That is, $a_2(a_3(\mascid)) = \mascid$.
\end{theorem}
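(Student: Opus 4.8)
The plan is to verify directly that the composite $a_2 \circ a_3$ is the identity on $\mascid$, by checking separately that it preserves (i) the object-level edge set, (ii) the partition of $\evars$ into decision, utility and chance nodes, and (iii) the assignment of decisions and utilities to players. I would first observe that both $a_3$ (\cref{alg:macid-to-xcg}) and $a_2$ (\cref{alg:xcg-to-macid}) are purely graphical --- $a_3$ uses only d-separation and directed-path tests in $\mascid$, $a_2$ is combinatorial --- so that of the hypotheses only \cref{assumption:ea-components-agents} will do any work, the remaining assumptions serving only to make ``mechanised game graph'' well-posed. Part (i) is then immediate: $a_3$ starts from the game-graph edge set $E$ and merely adds functional edges $\mecvar{V} \to V$ and mechanism edges between mechanism nodes, so the object-level edges of $\xscg = a_3(\mascid)$ are exactly $E$ (information edges into decisions included), while $a_2$ returns a game graph whose edge set is precisely $E^{\textrm{obj}}$; hence the object-level structure round-trips unchanged.

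The heart of the argument, which I would establish next, is that the terminal edges of $\xscg$ are exactly $E^{\textrm{term}} = \bigsqcup_{A \in N} \mecvar{E}^A$, where $\mecvar{E}^A := \{(\mecvar{U}, \mecvar{D}) : (D,U) \in E^A\}$ is the reversal of the agent-subgraph edge set of \cref{def:agent-subgraph}. This has two halves. For the ``only this form'' direction: the terminal-edge test of \cref{alg:macid-to-xcg} (Line~\ref{alg:macid-xcg-term-start}) --- a directed path $D \pathto V$ through no other utility of $A$, with $V$ ranging over the utilities of $A$ exactly as in \cref{def:agent-subgraph} --- is verbatim the condition $(D,V) \in E^A$, and any such $V$ is s-reachable from $D$ (adding a fresh parent $\hat V$ gives $\hat V \to V$ with $V \in \utilvars^D$), so the edge really does enter $E^{\textrm{mech}}$ first, confirming $E^{\textrm{term}} \subseteq E^{\textrm{mech}}$ as required of the output. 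For the ``all such edges appear'' direction I would invoke \cref{assumption:ea-components-agents}: the $\mascid^A$ are the weakly connected components of the decision-utility subgraph, pairwise node-disjoint, each with at least one decision and one utility, which (exactly as in the ``decision'' and ``utility'' parts of the proof of \cref{thm:correctness-algo}) forces every $D \in \decisionvars^A$ to have an outgoing $E^A$-edge and every $U \in \utilvars^A$ an incoming one. Together these give that the set of heads of terminal edges in $\xscg$ is exactly $\decisionvars$ and the set of tails is exactly $\utilvars$.

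From here (ii) and (iii) would follow by unwinding $a_2$: it labels $V$ a decision iff $\mecvar{V}$ has an ingoing terminal edge and a utility iff $\mecvar{V}$ has an outgoing one, so by the previous paragraph the recovered decision and utility sets coincide with the originals; since $\mascid$ is a genuine game graph, $\decisionvars \cap \utilvars = \varnothing$, so nothing is double-labelled and the chance nodes $\structvars = \evars \setminus (\decisionvars \cup \utilvars)$ are recovered as well. For the colouring, the undirected graph underlying $E^{\textrm{term}} = \bigsqcup_A \mecvar{E}^A$ has connected components exactly the vertex sets $\{\mecvar{V} : V \in \decisionvars^A \cup \utilvars^A\}$, one per player (each $\mecvar{E}^A$ being weakly connected, as the reversal of the weakly connected $\mascid^A$, and disjoint from the rest), and \cref{alg:xcg-to-macid} gives each such component a distinct fresh colour, recovering the player set and each player's decision and utility sets. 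Combining (i)--(iii) yields $a_2(a_3(\mascid)) = \mascid$.

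The step I expect to be the main obstacle is pinning down $E^{\textrm{term}} = \bigsqcup_A \mecvar{E}^A$ precisely --- neither dropping a terminal edge nor introducing a spurious one --- and this is exactly where \cref{assumption:ea-components-agents} is indispensable: it is what guarantees that every decision reaches one of its agent's utilities through no intervening utility (and conversely), and it keeps the colouring step from splitting or merging players. Compared with \cref{thm:correctness-algo}, however, this should be the easy case: $a_3$ reads the directed-path criterion directly off $\mascid$ rather than having to infer it from interventional data, so no optimality, preferred-ordering, or mechanism-richness arguments are needed here.
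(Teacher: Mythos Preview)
Your proposal is correct and follows essentially the same approach as the paper's proof: both establish that object-level edges are untouched by either algorithm, then use \cref{assumption:ea-components-agents} to argue that every decision (resp.\ utility) acquires an incoming (resp.\ outgoing) terminal edge via the directed-path criterion of \cref{alg:macid-to-xcg}, while non-decisions and non-utilities do not, and finally recover the colouring from the connected components of the terminal-edge graph. Your organisation is slightly more explicit than the paper's---you isolate the key claim $E^{\textrm{term}} = \bigsqcup_A \mecvar{E}^A$ and derive the decision/utility labelling from it, whereas the paper argues the decision and utility cases separately---and your remark that only \cref{assumption:ea-components-agents} does any real work here (the others being about interventional behaviour, not graph structure) is a useful sharpening that the paper leaves implicit.
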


\begin{proof}
   All edges between nodes are the same in $\mascid$ and $a_2(a_3(\mascid))$, because neither \cref{alg:xcg-to-macid}
   or \cref{alg:macid-to-xcg} changes the object-level edges. We will now show that the node types are the same in both. 
   
   {\bf{Decision:}}
   Let $A$ be an agent with utilities $\utilvars^A$ and let $\decisionvar \in \decisionvars^A$, then by \cref{assumption:ea-components-agents} $\exists \utilvar \in \utilvars^A$ and a directed path $D \pathto U$ not through another $U' \in \utilvars^A\setminus\{U\}$. 
   \cref{alg:macid-to-xcg} Lines~\ref{alg:macid-xcg-term-start}-~\ref{alg:macid-xcg-s-term-end} add $(\mecvar{\utilvar}, \mecvar{\decisionvar})$ to $E^{\textrm{term}}$. \cref{alg:xcg-to-macid} then adds $\decisionvar$ to the set of decisions, as desired.

   Let $\evar \in \evars\setminus\decisionvars$. \cref{alg:macid-to-xcg} Lines~\ref{alg:macid-xcg-term-start}-~\ref{alg:macid-xcg-s-term-end} only adds terminal mechanism edges going into decisions, and \cref{alg:xcg-to-macid} then doesn't add $\evar$ to the set of decisions, as desired.

   {\bf{Utility:}} 
    Let $A$ be an agent with decisions $\decisionvars^A$ and let $\utilvar \in \utilvars^A$, then by \cref{assumption:ea-components-agents} $\exists \decisionvar \in \decisionvars^A$ and a directed path $D \pathto U$ not through another $U' \in \utilvars^A\setminus\{U\}$. So \cref{alg:macid-to-xcg} Lines~\ref{alg:macid-xcg-term-start}-\ref{alg:macid-xcg-s-term-end} add $(\mecvar{\utilvar}, \mecvar{\decisionvar})$ to $E^{\textrm{term}}$. \cref{alg:xcg-to-macid} then adds $\utilvar$ to the set of utilities, as desired.
    
     Let $\evar \in \evars\setminus\utilvars$. \cref{alg:macid-to-xcg} Lines~\ref{alg:macid-xcg-term-start}-\ref{alg:macid-xcg-s-term-end} only adds terminal edges going out of utilities, so there will be no edge out of $\mecvar{\evar}$ in $E^{\textrm{term}}$. \cref{alg:xcg-to-macid} then doesn't add $\evar$ to the set of utilities, as desired.

    {\bf{Colouring:}} 
    By above paragraphs, the node types and edges are the same in both $a_2(a_3(\mascid))$ and $\mascid$. 
    By \cref{assumption:ea-components-agents} the colouring in $\mascid$ is a property of the connectedness and hence will be the same in $a_2(a_3(\mascid))$.
\end{proof}

We now consider the other direction: beginning with a mechanised causal graph, can we transform it to a game graph and then back to the same mechanised causal graph? In general this isn't possible, because the space of possible mechanised causal graphs is larger than the space of mechanised causal graphs that can be recovered using only the information present in a game graph. In particular, mechanisms with non-terminal incoming mechanism edges do not, in general, get codified in the game graph when using $a_2$.
Further, we will find it useful to consider only those mechanised causal graphs that are producible from a mechanised causal game satisfying \cref{assumption:ea-components-agents,assumption:opt-decision-rule,assumption:preferred-order,assumption:non-dec-term-parentless,assumption:mechanism-richness}, as this will enable us to use \cref{lem:s-reach-implies-mech-parent}.
Thus, in the next theorem, we restrict the space of mechanised causal graphs we consider. 

\begin{theorem}[\cref{alg:macid-to-xcg} is a left inverse of \cref{alg:xcg-to-macid}]
\label{thm:a_3-a_2-xcg}
    Let $\xscg$ be a mechanised causal graph such that 
    \begin{itemize}
        \item there exists a mechanised causal game, $\mecvar{\mascim}$, satisfying \cref{assumption:ea-components-agents,assumption:opt-decision-rule,assumption:preferred-order,assumption:non-dec-term-parentless,assumption:mechanism-richness}, such that $a_1(\mecvar{\mascim}) = \xscg$;
        \item any node with an incoming mechanism edge also has an incoming terminal edge, i.e.\ $\forall (\mecvar{V}, \mecvar{W}) \in E^{\textrm{mech}}, \exists (\mecvar{V'}, \mecvar{W}) \in E^{\textrm{term}}$, for some $\mecvar{V'} \in \mecvars{V}\setminus\{\mecvar{W}\}$.
    \end{itemize}
     Then $a_3(a_2(\xscg)) = \xscg$.
\end{theorem}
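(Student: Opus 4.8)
The plan is to run the two algorithms in sequence and check that they undo each other. Write $\mascid := a_2(\xscg)$ for the game graph produced by \emph{Agency Identification} (\cref{alg:xcg-to-macid}) and $\xscg' := a_3(\mascid)$ for the mechanised causal graph produced from it by \emph{Mechanism Identification} (\cref{alg:macid-to-xcg}); the goal is to show $\xscg' = \xscg$. The first step is to replace $a_2(\xscg)$ by a statement about the game $\mecvar{\mascim}$ supplied by the first hypothesis. Since $\xscg = a_1(\mecvar{\mascim})$ and $\mecvar{\mascim}$ satisfies \cref{assumption:ea-components-agents,assumption:opt-decision-rule,assumption:preferred-order,assumption:non-dec-term-parentless,assumption:mechanism-richness}, \cref{thm:correctness-algo} applies and tells us that $\mascid = a_2(a_1(\mecvar{\mascim}))$ is, as a labelled and coloured graph, exactly the game graph of $\mecvar{\mascim}$: same object-level edges, same partition of $\evars$ into decision, utility and chance nodes, and same agent colouring. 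It therefore suffices to show that $a_3$ applied to the game graph of $\mecvar{\mascim}$ reproduces $a_1(\mecvar{\mascim})$ -- i.e.\ that $a_3$ and $a_1$ agree on this game. \Cref{lem:s-reach-implies-mech-parent} is exactly the bridge we need here, since it characterises the mechanism edges and terminal edges drawn by $a_1$ purely in terms of the s-reachability relation and directed paths in the game graph, which is precisely what $a_3$ tests.

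With that reduction in hand, I would verify agreement componentwise. The object-level node set is $\evars$ on both sides, and neither $a_2$ nor $a_3$ alters object-level edges, so $E^{\textrm{obj}}$ agrees. \Cref{alg:macid-to-xcg} introduces exactly one mechanism node $\mecvar{\evar}$ and one functional edge $\mecvar{\evar}\to\evar$ for each $\evar\in\evars$, matching the mechanised-SCM structure of $\xscg$ (\cref{def:mechanised-scm}), so the mechanism node set and $E^{\textrm{func}}$ agree. For the mechanism edges, \cref{alg:macid-to-xcg} adds $(\mecvar{\evar},\mecvar{\decisionvar})$ precisely when $\evar$ is s-reachable from $\decisionvar$ in $\mascid$, ranging only over decisions $\decisionvar$; via the identification of $\mascid$ with the game graph of $\mecvar{\mascim}$, \cref{lem:s-reach-implies-mech-parent}(1) shows these are exactly the edges of $E^{\textrm{mech}}$ of $\xscg$ that point into a decision mechanism. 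For the terminal edges, \cref{alg:macid-to-xcg} adds $(\mecvar{\utilvar},\mecvar{\decisionvar})$ exactly when there is a directed path $\decisionvar\pathto\utilvar$ in $\mascid$ avoiding the other utilities of $\decisionvar$'s agent; by \cref{lem:s-reach-implies-mech-parent}(2), together with the fact -- established in the colouring part of the proof of \cref{thm:correctness-algo} -- that the terminal edges of $\xscg = a_1(\mecvar{\mascim})$ are precisely the utility-to-decision edges of this form, this reproduces $E^{\textrm{term}}$ of $\xscg$. An edge-labelled mechanised causal graph carries no further data (in particular, unlike a game graph, it has no agent colouring), so matching the nodes, $E^{\textrm{obj}}$, $E^{\textrm{func}}$, $E^{\textrm{mech}}$ and $E^{\textrm{term}}$ yields $\xscg' = \xscg$.

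The step with real content -- and the one I expect to be the main obstacle -- is ruling out mechanism edges of $\xscg$ that point into \emph{non}-decision mechanisms. \Cref{lem:s-reach-implies-mech-parent}(1) deliberately places no restriction on such edges, and \cref{alg:macid-to-xcg} never draws them, so a single such edge in $\xscg$ would break the claimed equality. This is exactly where the theorem's second hypothesis is used: every node with an incoming mechanism edge also has an incoming terminal edge, and by \cref{assumption:non-dec-term-parentless} only decision nodes have mechanisms with incoming terminal edges; hence every mechanism edge of $\xscg$ points into a decision mechanism, and restricting $a_3$'s s-reachability sweep to decisions loses nothing. I would be careful to spell out this reduction, to confirm that the agent and colour classes of $\mascid$ coincide with those of $\mecvar{\mascim}$ so that the ``agent $A$'' quantifiers in \cref{alg:macid-to-xcg} line up with those in \cref{lem:s-reach-implies-mech-parent}, and to note that the directed-path test in \cref{alg:macid-to-xcg}, as in \cref{lem:s-reach-implies-mech-parent}(2), is only triggered for utility nodes, so that no spurious terminal edges are introduced.
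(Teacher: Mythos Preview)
Your proposal is correct and follows essentially the same route as the paper: identify $a_2(\xscg)$ with the game graph of $\mecvar{\mascim}$ via \cref{thm:correctness-algo}, then use \cref{lem:s-reach-implies-mech-parent} to match $E^{\textrm{mech}}$ and $E^{\textrm{term}}$, invoking the second hypothesis together with \cref{assumption:non-dec-term-parentless} to rule out mechanism edges into non-decisions. The paper presents the $E^{\textrm{term}}$ and $E^{\textrm{mech}}$ arguments as two short chains of biconditionals rather than discursively, but the logical content and the lemmas invoked are the same; your extra care in flagging that the directed-path test must not fire for non-utilities is a point the paper handles only implicitly.
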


\begin{proof}
    The edges in $E^{\textrm{obj}}, E^{\textrm{func}}$ are the same in both ${\xscg}$ and $a_3(a_2(\xscg))$, since neither algorithm changes the object-level edges, and all mechanised causal graphs over object-level variables $\evars$ have the same edges in $E^{\textrm{func}}$, i.e. $\{(\mecvar{\evar}, \evar)\}_{\evar \in \evars}$, which are added in \cref{alg:macid-to-xcg} Lines~\ref{alg:func-loop-start}-\ref{alg:func-loop-end}. We now show why edges in $E^{\textrm{term}}$ and $E^{\textrm{mech}}$ are the same in both.
    
    From the theorem statement, $\exists \mec{\mascim}$ such that $a_1(\mec{\mascim}) = \xscg$. Let $\mascid$ be the game graph of $\mec{\mascim}$. 
    \begin{align*}
        \quad & (\mecvar{\utilvar}, \mecvar{\decisionvar}) \in E^{\textrm{term}} \textrm{ of }  \xscg
        \\
        \iff & (\mecvar{\utilvar}, \mecvar{\decisionvar}) \in E^{\textrm{term}} \textrm{ of } a_1(\mec{\mascim})
        \\
        \iff & \exists \textrm{ directed path } D \pathto U \textrm{ not through } U' \in \utilvars^A\setminus\{U\} \textrm{ in } \mascid \quad \textrm{(by \cref{lem:s-reach-implies-mech-parent})}
        \\
        \iff & \exists \textrm{ directed path } D \pathto U \textrm{ not through } U' \in \utilvars^A\setminus\{U\} \textrm{ in } a_2(a_1(\mec{\mascim})) \quad \textrm{(by \cref{thm:correctness-algo})}
        \\
        \iff & \exists \textrm{ directed path } D \pathto U \textrm{ not through } U' \in \utilvars^A\setminus\{U\} \textrm{ in } a_2(\xscg)
        \\
        \iff & (\mecvar{\utilvar}, \mecvar{\decisionvar}) \in E^{\textrm{term}}  \textrm{ of }  a_3(a_2(\xscg)) \quad \textrm{(by \cref{alg:macid-to-xcg}  Lines~\ref{alg:macid-xcg-term-start}-\ref{alg:macid-xcg-s-term-end})}.
    \end{align*}
    From the theorem statement, $\forall (\mecvar{V}, \mecvar{W}) \in E^{\textrm{mech}}, \exists (\mecvar{V'}, \mecvar{W}) \in E^{\textrm{term}}$, for some $\mecvar{V'} \in \mecvars{V}\setminus\{\mecvar{W}\}$. By \cref{assumption:non-dec-term-parentless}, $W$ must be a decision in $\mec{\mascim}$.
    Thus, we only need consider edges of the form $(\mecvar{\evar}, \mecvar{D}) \in E^{\textrm{mech}}$ where $D \in \decisionvars$.
    \begin{align*}
        \quad & (\mecvar{\evar}, \mecvar{\decisionvar}) \in E^{\textrm{mech}}  \textrm{ of } \xscg
        \\
        \iff & (\mecvar{\evar}, \mecvar{\decisionvar}) \in E^{\textrm{mech}}  \textrm{ of } a_1(\mec{\mascim})
        \\
        \iff & \evar \textrm{ is s-reachable from } \decisionvar \textrm{ in } \mascid \quad \textrm{(by \cref{lem:s-reach-implies-mech-parent})}
        \\
        \iff & \evar \textrm{ is s-reachable from } \decisionvar \textrm{ in } a_2(a_1(\mec{\mascim})) \quad \textrm{(by \cref{thm:correctness-algo})}
        \\
        \iff & \evar \textrm{ is s-reachable from } \decisionvar \textrm{ in } a_2(\xscg) 
        \\
        \iff & (\mecvar{\evar}, \mecvar{\decisionvar}) \in E^{\textrm{mech}} \textrm{ of } a_3(a_2(\xscg)) \quad \textrm{(by \cref{alg:macid-to-xcg}  Lines~\ref{alg:macid-xcg-s-reach-start}-\ref{alg:macid-xcg-s-reach-end})}.
    \end{align*}
\end{proof}

\section{Examples}
\label{sec:examples}

We now look at example applications of our algorithms, which help the modeler to draw the correct game graph to describe a system. 

\subsection{Simple example}
We begin by considering the simple example of \cref{fig:mouse} in more detail. The underlying system has game graph $\mascid_{\textrm{real}}$, displayed in \cref{subfig:mouse-mascid-real}, with $D$ a decision node, $\structvar$ a chance node and $\utilvar$ a utility node.
Recall that all variables are binary; $X=D$ with probability $p$, $X=1-D$ with probability $1-p$; and $U=X$ with probability $q$, $U=1-X$ with probability $1-q$.
Having specified the causal game, we can now describe the optimal decision rule -- this depends on the values of $p$ and $q$: if $p, q > 0.5$ or $p, q < 0.5$, then $D=1$ is optimal, if $p < 0.5, q > 0.5$ or $p > 0.5, q < 0.5$ then $D=0$ is optimal, and if either $p$ or $q$ is 0.5, then both $D=0$ and $D=1$ are optimal. 

We can now consider mechanism interventions, to understand what \cref{alg:loo-cd-xscm} will discover.
Suppose we soft intervene on $X$ and $U$ such that $p, q > 0.5$, so that the optimal policy is $D=1$. 
Then we change the soft intervention on $X$ such that $p < 0.5$, we will see the optimal policy change to $D=0$. Thus \cref{alg:loo-cd-xscm} draws an edge $(\mecvar{X}, \mecvar{\decisionvar})$. By a similar argument, it will also draw an edge $(\mecvar{U}, \mecvar{\decisionvar})$, which will be a terminal edge.
Thus \cref{alg:loo-cd-xscm} produces the edge-labelled mechanised causal graph $\xscg_{\textrm{model}}$ shown in \cref{subfig:mouse-xcg-model}.
\cref{alg:xcg-to-macid} then takes $\xscg_{\textrm{model}}$ and produces the correct game graph by identifying that only $\mecvar{\decisionvar}$ has incoming arrows, and so $\decisionvar$ is the only decision node, and that $\utilvar$ is the only variable which has its mechanism with an outgoing terminal edge into the mechanism for $D$, and hence is a utility. In this simple example, we have recovered the game graph of \cref{subfig:mouse-mascid-real}. 

\subsection{Optimising a model of a human}

We next consider an example from the influence diagram literature.
It has been suggested that a safety problem with content-recommendation systems is that they can nudge users towards more extreme views, to make it easier to recommend content that will generate higher utility for the system (e.g., more clicks), as the extreme views are more easily predictable \citep{benkler2018network, stray2021you, carroll2022estimating}. 
To combat this, \citet{everitt2021agent} propose that the system's utility be based on predicted clicks using a model of a user, rather than directly on actual clicks and the user's actions.
Their Fig.~4b is reproduced here in our \cref{subfig:opt-mascid}.
The node $H_1$ represents a human's initial opinion, and $H_2$ their influenced opinion after seeing an agent's recommended content, $D$. The agent observes a model of the human's initial opinion, $M$, and optimises for the predicted number of clicks, $U$, using the model $M$.

\begin{figure}
    \centering
    \begin{tabular}[c]{cc}
            \begin{subfigure}{0.45\textwidth}
        \centering
        \begin{influence-diagram}
            \node (D) [decision, player1] {$D$};
            \node (helpbelow) [below = of D, phantom] {};
            \node (helpabove) [above = of D, phantom] {};
            \node (helpleft) [left = of D, phantom] {};
            \node (helpright) [right = of D, phantom] {};
            \node (H1) [left = of helpabove] {$H_1$};
            \node (H2) [right = of helpabove] {$H_2$};
            \node (M) [left = of helpbelow] {$M$};
            \node (UD) [right = of helpbelow, utility, player1] {$\utilvar$};
            \edge {H1} {H2};
            \edge[information] {M} {D};
            \edge {D} {H2};
            \edge {D} {UD};
            \edge {M} {UD};
            \edge {H1} {M};
        \end{influence-diagram}
        \caption{$\mascid$}
        \label{subfig:opt-mascid}
    \end{subfigure}
    \begin{subfigure}{0.45\textwidth}
        \centering
        \begin{influence-diagram}
            \node (D) [decision, player1] {$D$};
            \node (helpbelow) [below = of D, phantom] {};
            \node (helpabove) [above = of D, phantom] {};
            \node (helpleft) [left = of D, phantom] {};
            \node (helpright) [right = of D, phantom] {};
            \node (H1) [left = of helpabove] {$H_1$};
            \node (H2) [right = of helpabove] {$H_2$};
            \node (M) [left = of helpbelow] {$M$};
            \node (UD) [right = of helpbelow, utility, player1] {$\utilvar$};
            \edge {H1} {H2};
            \edge[information] {M} {D};
            \edge {D} {H2};
            \edge {D} {UD};
            \edge {M} {UD};
            \edge {H1} {M};
              \node (MH1) [left = of H1, mechanism] {$\mecvar{H_1}$};
              \node (MH2) [right = of H2, mechanism] {$\mecvar{H_2}$};
              \node (MM) [left = of M, mechanism] {$\mecvar{M}$};
              \node (MD) [right = of D, mechanism] {$\mecvar{\decisionvar}$};
              \node (MUD) [right = of UD, mechanism] {$\mecvar{\utilvar}$};
            \path (MH1) edge[function-edge] (H1);
            \path (MH2) edge[function-edge] (H2);
            \path (MM) edge[function-edge] (M);
            \path (MD) edge[function-edge] (D);
            \path (MUD) edge[function-edge] (UD);
              \path (MH2) edge[mechanism-edge, bend left=0] (MUD);
              \path (MUD) edge[terminal-edge, bend left=0] (MD);
        \end{influence-diagram}
        \caption{$\xscg$}
        \label{subfig:opt-xcg}
        \end{subfigure}
    \end{tabular}
    \caption{Recommender system optimising a model of a human. 
    \ref{subfig:opt-mascid} Game graph, $\mascid$, from \citet[Fig.~4b]{everitt2021agent}. 
    \ref{subfig:opt-xcg} mechanised causal graph, $\xscg$, that \cref{alg:loo-cd-xscm} discovers. 
    Note the path $\mecvar{H_2} \to \mecvar{U^D} \to \mecvar{D}$ which implies the recommendation system's policy depends on how a human updates their opinions when shown the recommended content, which is not visible from the game graph. 
    } 
    \label{fig:opt}
\end{figure}

Drawing the mechanised causal graph (\cref{subfig:opt-xcg}) for this system reveals some critical subtleties.
First, there is a terminal edge $(\mecvar{U}, \mecvar{D})$, since this is the goal that the agent is trained to pursue.
But should there be an edge $(\mecvar{H_2}, \mecvar{U})$?
This depends on how the user model was obtained.
If, as is common in practice, the model was obtained by predicting clicks based on past user data, then changing how a human reacts to recommended content ($\mecvar{H_2}$), would lead to a change in the way that predicted clicks depend on the model of the original user ($\mecvar{U}$).
This means that there should be an edge, as we have drawn in \cref{subfig:opt-xcg}.
\citet{everitt2021agent} likely have in mind a different interpretation, where the predicted clicks are derived from $M$ according to a different procedure, described in more detail by \citet{Farquhar2022path}.
But the intended interpretation is ambiguous when looking only at \cref{subfig:opt-mascid} -- the mechanised graph is needed to reveal the difference.

Why does all this matter? \citet{everitt2021agent} use \cref{subfig:opt-mascid} to claim that there is no incentive for the policy to instrumentally control how the human's opinion is updated and they deem the proposed system safe as a result.
However, under one plausible interpretation,  our causal discovery approach yields the mechanised causal graph representation of \cref{subfig:opt-xcg}, which contains a directed path  $\mecvar{H_2} \pathto \mecvar{D}$. 
This can be interpreted as the recommendation system is influencing the human in a goal-directed way, as it is adapting its behaviour to changes in how the human is influenced by its recommendation (cf.\ discussion in \cref{sec:other-characterisations-of-agents}).

This example casts doubt on the reliability of graphical incentive analysis \citep{everitt2021agent} and its applications \citep{Ashurst2022,Evans2021,Everitt2021tampering,Farquhar2022path,langlois2021rl,cohen2021}.
If different interpretations of the same graph yields different conclusions, then graph-based inference does not seem possible.
Fortunately, by pinpointing the source of the problem, mechanised SCMs also contain the seed of a solution: graphical incentive analysis can be trusted (only) when all non-decision mechanism lack ingoing arrows.
Indeed, this mirrors the extra assumption needed for the equivalence between games and mechanised SCMs in \cref{thm:a_3-a_2-xcg}.
As mechanisms are often assumed completely independent, this is often not an unreasonable assumption (see also \cref{sec:related-work}).
Alternatively, it may be possible to use mechanised SCMs to generalise graphical incentive analysis to allow for dependent mechanisms, but we leave investigation of this for future work.

\subsection{Actor-Critic}

Our third example contains multiple agents. It represents an Actor-Critic RL setup for a one-step MDP \citep{Sutton2018}.
Here an \emph{actor} selects action $A$ as advised by a \emph{critic} (\cref{subfig:ac-mascid-real}).
The critic's action $Q$ states the expected reward for each action (in the form of a vector with one element for each possible choice of $A$, this is often called a \emph{Q-value function}).
The action $A$ influences the state $S$, which in turn determines the reward $R$.
We model the actor as just wanting to follow the advice of the critic, so its utility is $Y=Q(A)$ (the $A$-th element of the $Q$ vector).
The critic wants its advice $Y$ to match the actual reward $R$. Formally, it optimises $W=-(R-Y)^2$.

\begin{figure}
    \centering
    \begin{tabular}[c]{cc}
    \begin{subfigure}{0.45\textwidth}
        \centering
        \begin{influence-diagram}
          \node (Q) [decision, player1] {$Q$};
          \node (Y) [right = of Q, utility, player2] {$Y$};
          \node (W) [right = of Y, utility, player1] {$W$};
          \node (A) [above = of Q, decision, player2] {$A$};
          \node (S) at (Y|-A) {$S$};
          \node (R) at (W|-A) {$R$};
          \edge {Q} {Y};
          \edge {Y} {W};
          \edge {A} {Y};
          \edge {A} {S};
          \edge {S} {R};
          \edge {R} {W};
        \end{influence-diagram}
        \caption{$\mascid_{\textrm{real}}$}
        \label{subfig:ac-mascid-real}
    \end{subfigure}
    &
    \multirow{3}{*}[14pt]{
    \begin{subfigure}{0.45\textwidth}
        \centering
        \begin{influence-diagram}
          \node (Q) {$Q$};
          \node (Y) [right = of Q] {$Y$};
          \node (W) [right = of Y] {$W$};
          \node (A) [above = of Q] {$A$};
          \node (S) at (Y|-A) {$S$};
          \node (R) at (W|-A) {$R$};
          \node (MQ) [below = of Q, mechanism] {$\mecvar{Q}$};
          \node (MY) [below = of Y, mechanism] {$\mecvar{Y}$};
          \node (MW) [below = of W, mechanism] {$\mecvar{W}$};
          \node (MA) [above = of A, mechanism] {$\mecvar{A}$};
          \node (MS) [above = of S, mechanism] {$\mecvar{S}$};
          \node (MR) [above = of R, mechanism] {$\mecvar{R}$};
          \edge {Q} {Y};
          \edge {Y} {W};
          \edge {A} {Y};
          \edge {A} {S};
          \edge {S} {R};
          \edge {R} {W};
        \path (MQ) edge[function-edge] (Q);
        \path (MY) edge[function-edge] (Y);
        \path (MW) edge[function-edge] (W);
        \path (MA) edge[function-edge] (A);
        \path (MS) edge[function-edge] (S);
        \path (MR) edge[function-edge] (R);
          \path (MQ) edge[mechanism-edge, bend left=27] (MA);
          \path (MY) edge[terminal-edge, bend left=0] (MA);
          \path (MA) edge[mechanism-edge, bend right=40] (MQ);
          \path (MS) edge[mechanism-edge, bend left=55] (MQ);
          \path (MR) edge[mechanism-edge, bend left=0] (MQ);
          \path (MW) edge[terminal-edge, bend left=25] (MQ);
          \path (MY) edge[mechanism-edge, bend left=0] (MQ);
        \end{influence-diagram}
        \caption{$\xscg_{\textrm{model}}$}
        \label{subfig:ac-xcg-model}
    \end{subfigure}
    }
    \\
    \begin{subfigure}{0.45\textwidth}
        \centering
        \begin{influence-diagram}
          \node (Q) [decision, player1] {$Q$};
          \node (Y) [right = of Q] {$Y$};
          \node (W) [right = of Y, utility, player1] {$W$};
          \node (A) [above = of Q] {$A$};
          \node (S) at (Y|-A) {$S$};
          \node (R) at (W|-A) {$R$};
          \edge {Q} {Y};
          \edge {Y} {W};
          \edge {A} {Y};
          \edge {A} {S};
          \edge {S} {R};
          \edge {R} {W};
        \end{influence-diagram}
        \caption{$\mascid_{\textrm{wrong}}$}
        \label{subfig:ac-wrong}
    \end{subfigure}\\
    \begin{subfigure}{0.45\textwidth}
        \centering
        \begin{influence-diagram}
          \node (A) [decision, player3] {$A$};
          \node (S) [right = of A] {$S$};
          \node (R) [right = of S, utility, player3] {$R$};
          \edge {A} {S};
          \edge {S} {R};
        \end{influence-diagram}
        \caption{$\mascid_{\textrm{coarse}}$}
        \label{subfig:ac-coarse}
    \end{subfigure}
    \end{tabular}
    \caption{Actor-Critic. 
    \ref{subfig:ac-mascid-real} True game graph $\mascid_{\textrm{real}}$. \ref{subfig:ac-xcg-model} 
    \cref{alg:loo-cd-xscm} produces the mechanised causal graph $\xscg_{\textrm{model}}$. From $\xscg_{\textrm{model}}$, \cref{alg:xcg-to-macid} produces the correct game graph by identifying that $\mecvar{A}$ and $\mecvar{Q}$ have incoming arrows, so are decisions, and that $Y$ has its mechanism with an outgoing terminal edge to the mechanism for $A$ so is its utility, whilst $W$ has its mechanism with an outgoing terminal edge to the mechanism for $Q$, so is its utility. They are coloured differently due to having different utilities. 
    \ref{subfig:ac-wrong} Incorrect game graph for actor-critic. \ref{subfig:ac-coarse} Coarse-grained single-agent game graph.} 
\label{fig:ac}
\end{figure}

\cref{alg:loo-cd-xscm} produces the mechanised causal graph $\xscg_{\textrm{model}}$, in \cref{subfig:ac-xcg-model}. 
We don't justify all of the mechanism edges, but instead focus on a few of interest.
For example, there is an edge $(\mecvar{S}, \mecvar{Q})$ but there is no edge $(\mecvar{S}, \mecvar{A})$, i.e.\ the critic cares about the state mechanism but the actor does not.
The critic cares because it is optimising $W$ which is causally downstream of $S$, and so the optimal decision rule for $Q$ will depend on the mechanism of $S$ even when other mechanisms are held constant.
The dependence disappears if $R$ is cut off from $S$, so the edge $(\mec{S}, \mec{Q})$ is not terminal.
In contrast, the actor \emph{doesn't} care about the mechanism of $S$, because $Y$ is \emph{not} downstream of $S$, so when holding all other mechanisms fixed, varying $\mec{S}$ won't affect the optimal decision rule for $A$.
There is however an indirect effect of the mechanism for $S$ on the decision rule for $A$, which is mediated through the decision rule for $Q$.
\cref{alg:xcg-to-macid} applied to $\xscg_{\textrm{model}}$ produces the correct game graph by identifying that $\mecvar{A}$ and $\mecvar{Q}$ have incoming arrows, and therefore are decisions; that $Y$'s mechanism has an outgoing terminal edge to $A$'s mechanism and so is its utility; and that $W$'s mechanism has an outgoing terminal edge to the mechanism for $Q$, and so is its utility.
The decision-utility subgraph consists of two connected components, one being $(A, Y)$ and the other $(Q, W)$.
The decisions and utilities therefore get coloured correctly.

This can help avoid modelling mistakes and incorrect inference of agent incentives. In particular, Christiano (private communication, 2019) has questioned the reliability of incentive analysis from CIDs, because of an apparently reasonable way of modelling the actor-critic system where the actor is not modelled as an agent, shown in \cref{subfig:ac-wrong}. 
Doing incentive analysis on this single-agent diagram would lead to the assertion that the system is not trying to influence the state $S$ or the reward $R$, because they don't lie on the directed path $Q\to W$ (i.e.\ neither $S$ nor $R$ has an \emph{instrumental control incentive}; \citealp{everitt2021agent}). This would be incorrect, as the system is trying to influence both these variables (in an intuitive and practical sense).

The modelling mistake would be avoided by applying \cref{alg:loo-cd-xscm,alg:xcg-to-macid} to the underlying system, which produce \cref{subfig:ac-mascid-real}, differing from \cref{subfig:ac-wrong}.
The correct diagram has two agents, and it's not possible to apply the single-agent incentive concept from \citep{everitt2021agent}.
Instead, an incentive concept suitable for multi-agent systems would need to be developed.
For such a multi-agent incentives concept to be useful, it should capture the influence on $S$ and $R$ jointly exerted by $A$ and $Q$.

\cref{subfig:ac-coarse} shows a game graph that involves only a subset of the variables of the underlying system, i.e., a coarse-grained version. This is also an accurate description of the same underlying system, though with less detail. At this coarser level, we find an instrumental control incentive on $S$ and $R$, as intuitively expected.

\subsection{Modified Action Markov Decision Process}
\label{subsec:mamdp}

Next, we consider an example regarding the redirectability of different RL agents.
\citet{langlois2021rl} introduce  \emph{modified action Markov decision processes} (MAMDPs)  to model a sequential decision-making problem similar to an MDP, but where the agent's decisions, $D_t$, can be overridden by a human. In the game graph in \cref{fig:mamdp}, this is modelled by $D_t$ only influencing $S_t$ via a chance variable, $X_t$, which represents the potentially overridden decision. 

\cref{alg:loo-cd-xscm} produces the mechanised causal graph $\xscg_{\textrm{model}}$ in \cref{subfig:mamdp-xcg-model}, where for readability we restrict to mechanisms only -- for the full diagram see Fig.\ref{fig:supp-mamdp-xcg-model} in \cref{app:supp-figs}. There are many mechanism edges, so we only elaborate on the interpretation of one of the edges, $(\mecvar{X_1}, \mecvar{\decisionvar_1})$, in this mechanised causal graph. This edge represents that the agent's choice of decision rule is influenced by the mechanism for the potentially overridden variable, $X_1$. In general, in this decision problem, it will be suboptimal to ignore knowledge of the mechanism for the potentially overridden variables. 
\cref{alg:xcg-to-macid} applied to $\xscg_{\textrm{model}}$ produces the correct game graph by identifying that $\mecvar{D_1}$ and $\mecvar{D_2}$ have incoming terminal edges, so are decisions, and that $U$ has its mechanism with outgoing terminal edges to the mechanisms for decisions $D_1$ and $D_2$, and so is a utility. Since the utility is the same, the decisions are coloured the same to show they are the same agent.

\begin{figure}
    \centering
    \begin{subfigure}[b]{0.4\textwidth}
        \centering
        \begin{influence-diagram}

          \node (S1)  {$S_1$};
    
          \node (S2) [right = 2 of S1] {$S_2$};
         
          \node (S3) [right = 2 of S2] {$S_3$};
          
          \node (help1) [below = of S1, phantom] {};
          
          \node (D1) [right = of help1, decision, player1] {$D_1$};
          
          \node (D2) [right = 2 of D1, decision, player1] {$D_2$};
          
          \node (help2) [below = of D1, phantom] {};
          
          \node (X1) [right = of help2] {$X_1$};
          
          \node (X2) [right = 2 of X1] {$X_2$};
          
          \node (U) [above = of S3, utility, player1] {$U$};

          \edge {S1} {S2};
    
          \edge {S2} {S3};
          
          \edge[information] {D1} {D2};
          
          \edge {X1} {S2};
          
          \edge {X2} {S3};
          
          \edge[information] {S1} {D1};
          
          \edge[information] {S2} {D2};
          
          \edge[information] {S1} {D2};
          
          \edge {D1} {X1};
          
          \edge {D2} {X2};
          
          \edge {S1} {U};
          
          \edge {S2} {U};
          
          \edge {S3} {U};

        \end{influence-diagram}
        \caption{$\mascid_{\textrm{real}}$}
        \label{subfig:mamdp-mascid-real}
    \end{subfigure}
    \begin{subfigure}[b]{0.4\textwidth}
        \centering
        \begin{influence-diagram}

          \node (MS1) [mechanism] {$\mecvar{S_1}$};
    
          \node (MS2) [right = 1.5 of MS1, mechanism] {$\mecvar{S_2}$};
         
          \node (MS3) [right = 2 of MS2, mechanism] {$\mecvar{S_3}$};
          
          \node (help1) [below = of MS1, phantom] {};
          
          \node (MD1) [right = of help1, mechanism] {$\mecvar{D_1}$};
          
          \node (MD2) [right = 2 of MD1, mechanism] {$\mecvar{D_2}$};
          
          \node (help2) [below = of MD1, phantom] {};
          
          \node (MX1) [right = of help2, mechanism] {$\mecvar{X_1}$};
          
          \node (MX2) [right = 2 of MX1, mechanism] {$\mecvar{X_2}$};
          
          \node (MU) [above = of MS3, mechanism] {$\mecvar{U}$};

          \path (MU) edge[terminal-edge, bend right=25] (MD1);
          \path (MS2) edge[mechanism-edge, bend right=0] (MD1);
          \path (MS3) edge[mechanism-edge, bend right=0] (MD1);
          \path (MD2) edge[mechanism-edge, bend left=0] (MD1);
          \path (MX1) edge[mechanism-edge, bend right=0] (MD1);
          \path (MX2) edge[mechanism-edge, bend left=0] (MD1);
          \path (MU) edge[terminal-edge, bend right=0] (MD2);
          \path (MS3) edge[mechanism-edge, bend right=0] (MD2);
          \path (MX2) edge[mechanism-edge, bend right=0] (MD2);

        \end{influence-diagram}
        \caption{$\xscg_{\textrm{model}}$}
        \label{subfig:mamdp-xcg-model}
    \end{subfigure}
   \begin{subfigure}[b]{0.18\textwidth}
\resizebox{\textwidth}{!}{
      \begin{influence-diagram}
      \cidlegend{
   \legendrow              {}  {chance} \\
   \legendrow              {mechanism} {mechanism} \\
   \legendrow              {decision,player1}  {decision}\\
   \legendrow              {utility,player1}   {utility}\\
   \legendrow[causal]      {draw=none} {terminal} \\
   \legendrow[information] {draw=none} {non-terminal} }
 \edge[terminal-edge] {causal.west} {causal.east};
 \edge[mechanism-edge] {information.west} {information.east};
 \node [draw=none, below = of causal]  {};
     \end{influence-diagram}}
     \end{subfigure}
    \caption{Modified Action MDP. \ref{subfig:mamdp-mascid-real} The underlying system has game graph $\mascid_{\textrm{real}}$. \ref{subfig:mamdp-xcg-model} \cref{alg:loo-cd-xscm} produces the mechanised causal graph $\xscg_{\textrm{model}}$
    (we display mechanisms only, see Fig.\ref{fig:supp-mamdp-xcg-model} in \cref{app:supp-figs} for the full diagram).  
    Since the utility $U$ is the same, the decisions $D_1$ and $D_2$ are coloured the same to show they belong to the same agent.
    } 
    \label{fig:mamdp}
\end{figure}

We note that the game graph diagram presented here in \cref{subfig:mamdp-mascid-real} differs from Figure 2 of \cite{langlois2021rl}. The reason is that we have been stricter about what should appear in a game graph, and what should appear in a mechanised causal graph. In particular,  \citeauthor{langlois2021rl} have a node for the decision rule in their game graph, whereas we only have decision nodes in our game graphs, with decision rule nodes only appearing in mechanised causal graphs, along with other mechanism nodes. With this extra strictness comes greater expression and clarity -- in our game graph we are clear that the agent's decisions can't condition on the result of the modification, whereas \citeauthor{langlois2021rl} draw an information edge from the modification to the policy, which is a decision node in their diagram. 
Instead, we represent the fact that the decision rules are influenced by the mechanism for potentially overridden variables by the edges $(\mecvar{X_t}, \mecvar{\decisionvar_t})$ in the mechanised causal graph. This allows us to be clearer about what information is available for each decision (the state), but does not observe the modification, as might be construed from the diagram in \citet{langlois2021rl}.

\subsection{Zero agents}

Our final example of our algorithm working as desired is one in which there are no agents at all, see \cref{fig:zero}. Here $X$ causes $Y$ and $Y$ causes $Z$, but there is no decision or utility.
\cref{alg:loo-cd-xscm} produces the mechanised causal graph $\xscg_{\textrm{model}}$, in \cref{subfig:zero-xcg-model}.  \cref{alg:xcg-to-macid} produces the correct game graph by identifying that there are no decisions as there are no mechanisms with incoming edges, and hence also no utilities. This then just recovers a causal Bayesian network graph.

\begin{figure}
    \centering
    \begin{subfigure}[b]{0.45\textwidth}
        \centering
        \begin{influence-diagram}

          \node (X) {$X$};
    
          \node (Y) [right = of X] {$Y$};
    
          \node (Z) [right = of Y] {$Z$};

          \edge {X} {Y};
    
          \edge {Y} {Z};

        \end{influence-diagram}
        \caption{$\mascid_{\textrm{real}}$}
        \label{subfig:zero-mascid-real}
    \end{subfigure}
    \hfill
    \begin{subfigure}[b]{0.45\textwidth}
        \centering
        \begin{influence-diagram}

          \node (X) {$X$};
    
          \node (Y) [right = of X] {$Y$};
    
          \node (Z) [right = of Y] {$Z$};

          \edge {X} {Y};
    
          \edge {Y} {Z};
          
          \node (MX) [above = of X, mechanism] {$\mecvar{X}$};
    
          \node (MY) at (Y|-MX) [mechanism] {$\mecvar{Y}$};
    
          \node (MZ) at (Z|-MX) [mechanism] {$\mecvar{Z}$};

    
          
        \path (MX) edge[function-edge] (X);
        \path (MY) edge[function-edge] (Y);
        \path (MZ) edge[function-edge] (Z);

        \end{influence-diagram}
        \caption{$\xscg_{\textrm{model}}$}
        \label{subfig:zero-xcg-model}
    \end{subfigure}
    \caption{Zero agents. \ref{subfig:zero-mascid-real} The true game graph $\mascid_{\textrm{real}}$ has no decisions or utilities, so is a standard causal Bayesian network. 
    \ref{subfig:zero-xcg-model} \cref{alg:loo-cd-xscm} produces the mechanised causal graph $\xscg_{\textrm{model}}$.  \cref{alg:xcg-to-macid} produces the correct game graph by identifying that there are no agents, and just recovers the standard causal Bayesian network.
    } 
    \label{fig:zero}
\end{figure}

\subsection{Breaking Assumptions}
\label{sec:breaking-assumptions}
Compared to the other assumptions which are more benign, \cref{assumption:ea-components-agents} rules out some examples that we might wish to consider. We now consider some examples which break it. 

\subsubsection{Multiple agents with a shared utility}
\label{sec:multiple-agents-with-a-shared-utility}

First, in \cref{fig:cirl}, we consider a causal game that has two agents with a shared utility, see \cref{subfig:cirl-mascid-real}. 
This is a diagram that represents an Assistance Game, formerly known as Cooperative Inverse Reinforcement Learning \citep{hadfield2016cooperative}. There is a human which makes decisions $H_1$ and $H_2$, conditioned on information about their preference, encoded by $\theta$, and a robot which makes decisions $R_1$ and $R_2$ based on observations of the human's decisions, but without direct observation of $\theta$. All of the human and robot decisions affect the utility, $U$ which is the same for both robot and human agents (drawn in yellow to signify that it's shared). 
This breaks \cref{assumption:ea-components-agents} because the decision-utility subgraph only has one weakly connected component, and two agent subgraphs, whereas \cref{assumption:ea-components-agents} requires the weakly connected components be the two agent subgraphs.

\begin{figure}
    \centering
    \begin{subfigure}[t]{0.3\textwidth}
        \centering
        \begin{influence-diagram}
        
          \node (help1) [phantom] {};
          
          \node (helpup) [above = 1.5 of help1, phantom] {};
          
          \node (helpdown) [below = 1.5 of help1, phantom] {};
          
          \node (helpleft) [left = 1.5 of help1, phantom] {};
          
          \node (helpright) [right = 1.5 of help1, phantom] {};

          \node (R1) at (helpleft|-helpup) [decision, player1] {$R_1$};
          
          \node (R2) at (helpright|-helpup) [decision, player1] {$R_2$};

          \node (H1) at (helpleft|-helpdown) [decision, player2] {$H_1$};
          
          \node (H2) at (helpright|-helpdown) [decision, player2] {$H_2$};

          \node (T) [below = of helpdown] {$\theta$};

          \node (U) [right = of helpright, utility, player3] {$U$};

          \edge[information] {R1}  {R2};
          
          \edge[information] {H1} {H2};
          
          \edge[information] {R1} {H2};
    
          \edge[information] {H1} {R1};
          
          \edge[information] {H2} {R2};
          
          \edge[information] {H1} {R2};
          
          \path (T) edge[->, bend right=45] (U);
          
          \edge {R1} {U};
          
          \edge {R2} {U};
          
          \edge {H1} {U};
          
          \edge {H2} {U};
          
          \edge[information] {T} {H1};
          
          \edge[information] {T} {H2};

        \end{influence-diagram}
        \caption{$\mascid_{\textrm{real}}$}
        \label{subfig:cirl-mascid-real}
    \end{subfigure}
    \hfill
    \begin{subfigure}[t]{0.38\textwidth}
        \centering
        \begin{influence-diagram}
          \node (help1) [phantom] {};
          
          \node (helpup) [above = 1.5 of help1, phantom] {};
          
          \node (helpdown) [below = 1.5 of help1, phantom] {};
          
          \node (helpleft) [left = 1.5 of help1, phantom] {};
          
          \node (helpright) [right = 1.5 of help1, phantom] {};

          \node (MR1) at (helpleft|-helpup)  [mechanism] {$\mecvar{R_1}$};
          
          \node (MR2) at (helpright|-helpup)  [mechanism] {$\mecvar{R_2}$};

          \node (MH1) at (helpleft|-helpdown)  [mechanism] {$\mecvar{H_1}$};
          
          \node (MH2) at (helpright|-helpdown)  [mechanism] {$\mecvar{H_2}$};

          \node (MT) [below = of helpdown] [mechanism] {$\mecvar{\theta}$};

          \node (MU) [right = of helpright] [mechanism] {$\mecvar{U}$};

              \path (MU) edge[terminal-edge, bend left=0] (MH1);
              \path (MH2) edge[mechanism-edge, bend left=0] (MH1);
              \path (MR1) edge[mechanism-edge, bend right=20] (MH1);
              \path (MR2) edge[mechanism-edge, bend left=10] (MH1);

              \path (MU) edge[terminal-edge, bend left=0] (MH2);
              \path (MR2) edge[mechanism-edge, bend left=20] (MH2);
              
              \path (MU) edge[terminal-edge, bend right=0] (MR1);
              \path (MH1) edge[mechanism-edge, bend right=20] (MR1);
              \path (MR2) edge[mechanism-edge, bend left=0] (MR1);
              \path (MH2) edge[mechanism-edge, bend left=0] (MR1);
              \path (MT) edge[mechanism-edge, bend left=0] (MR1);
              
              \path (MU) edge[terminal-edge, bend right=0] (MR2);
              \path (MH1) edge[mechanism-edge, bend left=10] (MR2);
              \path (MH2) edge[mechanism-edge, bend left=20] (MR2);
              \path (MT) edge[mechanism-edge, bend left=0] (MR2);

        \end{influence-diagram}
        \vspace{5mm}
        \caption{$\xscg_{\textrm{model}}$ restricted to mechanisms only}
        \label{subfig:cirl-xcg-model}
    \end{subfigure}
       \begin{subfigure}[t]{0.3\textwidth}
        \centering
        \begin{influence-diagram}
        
          \node (help1) [phantom] {};
          
          \node (helpup) [above = 1.5 of help1, phantom] {};
          
          \node (helpdown) [below = 1.5 of help1, phantom] {};
          
          \node (helpleft) [left = 1.5 of help1, phantom] {};
          
          \node (helpright) [right = 1.5 of help1, phantom] {};

          \node (R1) at (helpleft|-helpup) [decision, player1] {$R_1$};
          
          \node (R2) at (helpright|-helpup) [decision, player1] {$R_2$};

          \node (H1) at (helpleft|-helpdown) [decision, player1] {$H_1$};
          
          \node (H2) at (helpright|-helpdown) [decision, player1] {$H_2$};
          
          
          
          \node (T) [below = of helpdown] {$\theta$};

          \node (U) [right = of helpright, utility, player1] {$U$};

          \edge[information] {R1} {R2};
          
          \edge[information] {H1} {H2};
          
          \edge[information] {R1} {H2};
    
          \edge[information] {H1} {R1};
          
          \edge[information] {H2} {R2};
          
          \edge[information] {H1} {R2};
          
          \path (T) edge[->, bend right=45] (U);
          
          \edge {R1} {U};
          
          \edge {R2} {U};
          
          \edge {H1} {U};
          
          \edge {H2} {U};
          
          \edge[information] {T} {H1};
          
          \edge[information] {T} {H2};

        \end{influence-diagram}
        \caption{$\mascid_{\textrm{model}}$}
        \label{subfig:cirl-mascid-model}
    \end{subfigure}
    \caption{Assistance Game (A.K.A. CIRL). \ref{subfig:cirl-mascid-real} True game graph $\mascid_{\textrm{real}}$, where the yellow utility indicates both robot and human share the same utility. 
    \ref{subfig:cirl-xcg-model} \cref{alg:loo-cd-xscm} produces the mechanised causal graph $\xscg_{\textrm{model}}$, shown here restricted to mechanisms only for readability -- see \cref{app:supp-figs}, \cref{fig:supp-cirl-xcg-model} for the full mechanised causal graph.  
    \ref{subfig:cirl-mascid-model}
    \cref{alg:xcg-to-macid} produces the an incorrect game graph in this case, because we violated \cref{assumption:ea-components-agents}, and gives that all decisions belong to the same agent.} 
    \label{fig:cirl}
\end{figure}

\cref{alg:loo-cd-xscm} produces the mechanised causal graph $\xscg_{\textrm{model}}$ shown in \ref{subfig:cirl-xcg-model}. We believe this mechanised causal graph representation of the system is correct. However, the problem arises when we apply \cref{alg:xcg-to-macid} on it. The result is the game graph in \ref{subfig:cirl-mascid-model} which has all decisions belonging to the same agent. We hope that in future work we will be able to distinguish agents which share the same utility, through some modification to the colouring logic of \cref{alg:xcg-to-macid}. One approach may be to use some condition involving sufficient recall \citep{milch2008ignorable} to distinguish between agents (a game graph has sufficient recall if for all agents the mechanism graph restricted to that agent's decision rules is acyclic).

\subsubsection{Non-descendent utility}

We now consider an example, \cref{subfig:ndu-mascid-real}, that breaks \cref{assumption:ea-components-agents} in another way. There are two agents which make decisions $A$ and $B$ with utilities $U^A$ and $U^B$ respectively. The red agent chooses $A$, which affects the utility that the blue agent receives $U^B$. The blue agent's choice affects the red agent's utility. Note that the agent subgraph for $A$ is disconnected (no directed path from $A$ to $U^A$), so this example violates \cref{assumption:ea-components-agents}. 

\cref{alg:loo-cd-xscm} applied to $\mascid_{\textrm{real}}$ produces the mechanised causal graph $\xscg_{\textrm{model}}$, in \cref{subfig:ndu-xcg-model}.
We think this mechanised causal graph is an accurate representation of the system. From inspecting it, we can see that although $U^A$ is not a descendent of $A$, it is a descendent of $\mecvar{A}$, via $\mecvar{A} \rightarrow \mecvar{B} \rightarrow B \rightarrow U^A$. That is, the red agent's decision rule can still have an effect on its utility, but \cref{assumption:opt-decision-rule,assumption:preferred-order} rule out agents strategising using this path.
Applying \cref{alg:xcg-to-macid} on $\xscg_{\textrm{model}}$ produces an incorrect game graph with $A$ and $\utilvar^A$ being incorrectly identified as chance nodes (\cref{subfig:ndu-mascid-model}).

This example highlights several questions for future work:
Which agents learn to influence their utility by means of their decision rule, thereby breaking our \cref{assumption:opt-decision-rule,assumption:preferred-order}? 
And how can \cref{alg:xcg-to-macid} be generalised to handle non-descendant utilities and agents utilising influence from their decision rule?

\begin{figure}
    \centering
    \begin{subfigure}[b]{0.3\textwidth}
        \centering
        \begin{influence-diagram}

          \node (A) [decision, player1] {$A$};
          
          \node (B) [below = of A, decision, player2] {$B$};

          \node (UB) [right = of A, utility, player2] {$U^B$};

          \node (UA) [right = of UB, utility, player1] {$U^A$};

          \edge {A} {UB};
    
          \edge {B} {UB};
          
          \edge {B} {UA};

        \end{influence-diagram}
        \caption{$\mascid_{\textrm{real}}$}
        \label{subfig:ndu-mascid-real}
    \end{subfigure}
    \hfill
    \begin{subfigure}[b]{0.37\textwidth}
        \centering
        \begin{influence-diagram}

          \node (A) {$A$};
          
          \node (B) [below = of A, ] {$B$};

          \node (UB) [right = of A] {$U^B$};

          \node (UA) [right = of UB, ] {$U^A$};

          \edge {A} {UB};
    
          \edge {B} {UB};
          
          \edge {B} {UA};
          
          \node (MA) [above = of A, mechanism] {$\mecvar{A}$};
          \node (MB) [left = of B, mechanism] {$\mecvar{B}$};
          \node (MUB) [below = of UB, mechanism] {$\mecvar{U^B}$};
          \node (MUA) [above = of UA, mechanism] {$\mecvar{U^A}$};

            \path (MA) edge[function-edge] (A);
            \path (MB) edge[function-edge] (B);
            \path (MUB) edge[function-edge] (UB);
            \path (MUA) edge[function-edge] (UA);

              \path (MA) edge[mechanism-edge, bend left=0] (MB);
              \path (MUB) edge[terminal-edge, bend left=25] (MB);

        \end{influence-diagram}
        \vspace{-3mm}
        \caption{$\xscg_{\textrm{model}}$}
        \label{subfig:ndu-xcg-model}
    \end{subfigure}
       \begin{subfigure}[b]{0.3\textwidth}
        \centering
        \begin{influence-diagram}

          \node (A) {$A$};
          
          \node (B) [below = of A, decision, player2] {$B$};

          \node (UB) [right = of A, utility, player2] {$U^B$};

          \node (UA) [right = of UB] {$U^A$};

          \edge {A} {UB};
    
          \edge {B} {UB};
          
          \edge {B} {UA};

        \end{influence-diagram}
        \caption{$\mascid_{\textrm{model}}$}
        \label{subfig:ndu-mascid-model}
    \end{subfigure}
    \caption{Non-descendant utility. \ref{subfig:ndu-mascid-real} True game graph $\mascid_{\textrm{real}}$. Note that the agent subgraph  for $A$ (\cref{def:agent-subgraph}) is not connected, violating \cref{assumption:ea-components-agents}. \ref{subfig:ndu-xcg-model} \cref{alg:loo-cd-xscm} produces the mechanised causal graph $\xscg_{\textrm{model}}$. 
    \ref{subfig:ndu-mascid-model}
    \cref{alg:xcg-to-macid} produces an incorrect game graph in this case, because we violated \cref{assumption:ea-components-agents}, leading to $A$ and $\utilvar^A$ being incorrectly identified as chance, rather than as decision and utility variables respectively.} 
    \label{fig:ndu}
\end{figure}

\section{Discussion}
\label{sec:discussion}

\subsection{Relativism of variable types}
\label{subsec:relativism}
The first thing we discuss is that the variable types in a causal game, i.e. decision, utility or chance, are only meaningful relative to the choice of which variables are included in the model. Whether our procedure of \cref{alg:loo-cd-xscm} followed by \cref{alg:xcg-to-macid} classifies a variable as decision, utility or chance depends on what other variables are included in the graph. For example, if in reality there is a utility variable $\utilvar$ which is not present in the model (i.e., the set of variables doesn't include $\utilvar$), but some of its parents, $\Pa^\utilvar$, are present in the model, then those parents will be labelled as utilities. Similarly, if in reality there is a decision variable, $\decisionvar$, which is not present in the model, but some of its children, $\Ch^\decisionvar$, are present in the model, then those children will be labelled as decisions. See \cref{app:relativism-example} for a simple example of this relativism.
In a sense, a choice of variables represents a frame in which to model the system, and what is a decision or a utility node is frame-dependent.

\subsection{Modelling advice}
\label{subsec:modelling-advice}

How does one identify the relevant variables to begin with?
\cref{alg:loo,alg:loo-cd-xscm,alg:xcg-to-macid} only provides a way to determine the structure of a mechanised SCM and associated game graph from a given set of variables, but not how to choose them.
We now offer some tips on choosing variables.

A few principles always apply.
First, variables should represent aspects of the environment that we are concerned with, either as means of influence for an agent, or as inherently valuable aspects of the environment. 
The content selected by a content recommender system, and the preferences of a user, are good examples.
Second, it should be fully clear both how to measure and how to intervene on a variable.
Otherwise its causal relationship to other variables will be ill-defined.
In our case, this requirement extends also to the mechanism of each variable.
Third, a variable's domain should be exhaustive (cover all possible outcomes of that variable) and represent mutually exclusive events (no pair of outcomes can occur at the same time) \citep{kjaerulff2008bayesian}.
Finally, variables should be \emph{logically independent}: one variable taking on a value should never be mutually exclusive with another variable taking on a particular value \citep{Halpern2010}.

It's important to clarify whether a variable is object-level or a mechanism. For example, previous work \citep{langlois2021rl} has drawn a policy (i.e., a mechanism) in a way that makes it look like an object-level variable, which led to some confusion, whereas in \cref{subsec:mamdp} we take the decision rule to be a mechanism.
Another lesson learnt is that there are important differences between a utility and a variable which is merely instrumental for that utility. This is evident when performing a structural mechanism intervention to cut off instrumental variables from their downstream utilities, in which case a decision-maker won't respond to changes only in the instrumental variable.

Of particular importance is the level of coarse-graining in the choice of variables. There are some works on the marginalisation of Bayesian networks \citep{evans2016graphs,kinney2020causal}, and in cyclic SCMs \citep{bongers2016foundations}, which allow for one to marginalise out some variables. We hope to explore marginalisation in the context of game graphs  in future work, and present one example in \cref{sec:app-marg-merge}. The choice of coarse-graining may have an impact on whether agents are discovered.

\subsection{Relationship to Causality Literature}
\label{sec:related-work}

We now discuss some related literature in Causality. Other related work was discussed in \cref{sec:other-characterisations-of-agents}.
\cite{pearl2009causality} lays the foundations for modern approaches to causality, with emphasis on graphical models, and in particular through the use of structural causal models (SCMs), which allow for treatment of both interventions and counterfactuals.  
\cite{dawid2002} considers related approaches to causal modelling, including the use of influence diagrams to specify  which variables can be intervened on. One model that's introduced is called a \emph{parameter DAG}, which is similar to our mechanised SCM, in that each object-level variable has a \emph{parameter} variable which parametrises the distribution of the object-level variable. However, whilst acknowledging there could be links between the parameter variables, they are not considered in that work.
In contrast, our focus is less on using influence diagrams as a tool for causal modelling, and rather on modelling and discovering agents using causal methods. Further, we allow relationships between mechanism variables in our models, and elucidate their relation to decision, chance and utility variables in the influence diagram representation.

\citet{halpern2000axiomatizing} gives an axiomatization of SCMs, generalizing to cases where the structural equations may not have a unique (or any) solution.
However, in the case of non-unique (or non-existant) solutions, potential response variables are ill-defined, which \cite{white2009settable} claim prevents the desired causal discourse. They instead propose the \emph{settable systems} framework in which there are \emph{settable} variables which have a role-indicator argument which determines whether a variable's value is a \emph{response}, determined by its structural equation, or if its value is a \emph{setting}, as determined by a hard intervention.
\cite{bongers2016foundations} give formalizations for statistical causal modelling using cyclic SCMs, proving certain properties present in the acyclic case don't hold in the cyclic case.
In our work, we use mechanised SCMs that can have cycles between mechanism variables. Zero, one or multiple solutions reflect the multiple equilibria arising in some games. Our formalism for mechanised SCMs follows the cyclic SCM treatment of \cite{bongers2016foundations}. 

\cite{correa2020calculus} develop \emph{sigma-calculus} for reasoning about the identification of the effects of soft interventions using observational, rather than experimental data. In our work, we assume access to experimental data, which makes the identification question trivial. Future work could relax this assumption to explore when agents can be discovered from observational data.
Their \emph{regime indicators} roughly correspond to our mechanism variables.

\label{sec:causal-discovery-review}

Our work draws on structure discovery in the causal discovery literature. See \cite{glymour2019review} for a review, and 
\cite{forre2018constraint} for an example of causal discovery of cyclic models.
The usual focus in causal discovery is not to model agents, but rather to model some physical (agent-agnostic) system (modelling agents is usually done in the context of decision/game theory). Our work differs in that we use causal discovery in order to get a causal model representation of agents (a mechanised SCM), and can then translate that to the game-theoretic description in terms of game graphs with agents.

One of the most immediate applications of our results concerns the independent causal mechanisms (ICM) principle \citep{scholkopf2012causal,peters2017elements,scholkopf2021toward}. ICM states that,

\begin{enumerate}
    \item Changing (intervening on) the causal mechanism $M^X$ for $P(X\mid\mathbf{pa}^X)$ does not change any of the other mechanisms $M^Y$ for $P(Y\mid\mathbf{pa}^Y)$, $X \neq Y$.
    \item Knowledge of $M^X$ does not provide knowledge of $M^Y$ for any $X \neq Y$.
\end{enumerate}

ICM argues that $P(X\mid\mathbf{pa}^X)$ typically describes fixed and modular causal mechanisms that do not respond to the mechanisms of other variables. The classic example is the distribution of atmospheric temperature $T$ given its causes $\mathbf{pa}^T$ such as altitude. While the distribution $P(\mathbf{pa}^T)$ may vary between countries, $P(T\mid\mathbf{pa}^T)$ remains fixed as it describes a physical law relating altitude (and other causes) to atmospheric temperature. In recent years ICM has become the predominant inductive bias used in causal machine learning including causal and disentangled representations \citep{bengio2013representation,locatello2019challenging,scholkopf2022causality}, causal discovery \citep{janzing2012information,janzing2010causal}, semi-supervised learning \citep{scholkopf2012causal}, adversarial vulnerability \citep{schott2018towards}, reinforcement learning \citep{bengio2019meta}, and has even played a role in major scientific discoveries such as discovering the first exoplanet with atmospheric water \citep{foreman2015systematic}. 
Our results provide a constraint on the applicability of the ICM principle; namely that 

\begin{quote}
    $P(X\mid\mathbf{pa}^X)$ does not obey the ICM principle if it is an agent's decision rule, 
    or is strategically relevant to some agent's decision rule, 
    as determined by \cref{alg:xcg-to-macid}.
\end{quote}

Condition 1 in the ICM is true only if $X$ is not strategically relevant for an agent, and condition 2 covers agents themselves, as their mechanisms are correlated with the mechanisms of strategically relevant variables. This limits the applicability of ICM (and methods based on ICM) to systems where the data generating process includes no agents. Likely examples include sociological data and data generated by reinforcement learning agents during training. However, our hope is that \cref{alg:loo-cd-xscm} can be applied to identify mechanism edges that violate ICM, allowing ICM to be applied to the correct systems, and in doing so improve the performance of ICM-based methods.

\section{Conclusion}

We proposed the first formal causal definition of agents.
Grounded in causal discovery, our key contribution is to formalise the idea that agents are systems that adapt their behaviour in response to changes in how their actions influence the world.
Indeed, \cref{alg:loo-cd-xscm,alg:xcg-to-macid} describe a precise experimental process that can, in principle and under some assumptions, be done to assess whether something is an agent.
Our process is largely consistent with previous, informal characterisations of agents (e.g.\ \citealp{dennett1987intentional,Wiener1961,flint2020the,garrabrant2021saving}), but making it formal 
enables agents and their incentives to be identified empirically or from the system architecture.
Our process improves upon an earlier formalisation by \citet{orseau2018agents}, by better handling systems with a small number of actions and "accidentally optimal" systems (see \cref{sec:other-characterisations-of-agents} for details).

Causal modelling of AI systems is a tool of growing importance, and this paper grounds this area of work in causal discovery experiments.
We have demonstrated the utility of our approach by improving the safety analysis of several AI systems (see \cref{sec:examples}).
In consequence, this would also improve the reliability of methods building on such modelling, such as analyses of the safety and fairness of machine learning algorithms (see e.g.\ \citealp{Ashurst2022,Evans2021,Everitt2021tampering,Farquhar2022path,langlois2021rl,cohen2021,richens2022counterfactual}).

\section*{Acknowledgement}
We thank Laurent Orseau, Mary Phuong, James Fox, Lewis Hammond, Francis Rhys Ward, and Ryan Carey for comments and discussions. 

\bibliography{main}

\newpage
\appendix

\section{Mathematical Background}
\label{app:math-background}

\counterwithin{definition}{section}

\subsection{Notation}
We use roman capital letters $V$ for variables, lower case for their outcomes $v$. We use bold type to indicate vectors of variables, $\evars$, and vectors of outcomes $\evals$. 
Parent, children, ancestor and descendent variables are denoted $\Pa^\evar, \Ch^\evar, \Anc^\evar, \Desc^\evar$, respectively, with the family denoted by $\Fa^\evar = \Pa^\evar \cup \{\evar\}$. 
We use $\dom(V)$ and $\dom(\evars) = \times_{V\in \evars} \dom(V)$ to denote the set of possible outcomes of $V$ and $\evars$ respectively, which are assumed finite.
Subscripts are reserved for denoting submodels and potential responses to an intervention.

\subsection{Structural Causal Model}

We begin with a standard definition of a structural causal model.

\begin{definition}[Structural Causal Model (SCM), \citealp{pearl2009causality}]
    \label{def:scm}
    A \emph{structural causal model (SCM)} is given by the tuple $\scm = \langle \evars, \exovars^{\evars}, \structfns, \Pr(\exovars^{\evars}) \rangle$ where
    \begin{itemize}
        \item $\evars$ is a set of endogenous variables.
        \item $\exovars^{\evars} = \{\exovar^{\evar}\}_{\evar \in \evars}$ is a set of exogenous variables, one for each endogenous variable.
        \item $\structfns = \{\evar = \structfn^{\evar}(\evars, \exovar^\evar)\}_{\evar \in \evars}$ is a set of structural equations, one for each endogenous variable.
        \item $\Pr(\exovars^{\evars})$ is a distribution over the exogenous variables. 
     \end{itemize}
\end{definition}

This has an associated directed graph, called a causal graph (CG).
\begin{definition}[Causal Graph (CG)]
    \label{def:cg}
    For an SCM, $\scm = \langle \evars, \exovars^{\evars}, \structfns, \Pr(\exovars^{\evars}) \rangle$, a \emph{causal graph} (CG) is the directed graph $\scg = \langle\evars, E\rangle$, where the set of directed edges, $E$, represent endogenous dependencies in the set of structural equations $\structfns$, so that 
    $(\evar, \evaralt) \in E$ if and only if $\evaralt, \evar \in \evars$ and $f^\evaralt(\evars, \exovar^\evar)$ depends on the value of $\evar$ (as such, all our causal graphs are faithful \citep{pearl2009causality} by construction).
\end{definition}
 The subgraph of white nodes in \cref{subfig:mouse-xcg-model} is an example of a CG.

In some parts of this work, we will consider acyclic (recursive) SCMs, in which the CG is acyclic. 
Other parts will consider cases in which there is a possibly cyclic (nonrecursive) SCM, in which the CG is cyclic.
See \cite{bongers2016foundations} for a foundational treatment of  SCMs with cyclic CGs.
They define a solution of an SCM as a set of exogenous and endogenous random variables, $\exovars, \evars$, for which the exogenous distribution matches that in the cyclic SCM, and for which the structural equations are satisfied. For a solution, $\exovars, \evars$, the distribution over the endogenous variables, $\Pr^{\evars}$ is called the observational distribution associated to $\evars$.
In this cyclic case, there can be zero, one or many observational distributions, due to the existence of different solutions of the structural equations.
In this work, we assume the existence of a unique solution, even in the case of a nonrecursive SCM. 
This unique solution then defines a joint distribution over endogenous variables \citep{pearl2009causality}
\begin{align}
    \Pr^{[\scm]}(\evars = \evals) =  \sum_{\{\exovals | \evars(\exovals)= \evals \}} \Pr(\exovals).
\end{align}

SCMs model causal interventions that set variables to particular outcomes, captured by the following definition of a submodel:
\begin{definition}[SCM Submodel, \citealp{pearl2009causality}]
    \label{def:submodel}
    Let $\scm = \langle \evars, \exovars^{\evars}, \structfns, \Pr(\exovars^{\evars}) \rangle$ be an SCM, $\sY \subseteq \evars$ 
    be a set of endogenous variables, and $\sy \in \dom(\sY)$ a value for each variable in that subset.
    The submodel $\scm_{\sy}$ represents the effects of an \emph{intervention} $\doo(\sY=\sy)$ and is formally defined as the SCM $\scm_{\sy} = \langle \evars, \exovars^{\evars}, \structfns_{\sy}, \Pr(\exovars^{\evars}) \rangle$ where $\structfns_{\sy} = \{\evar = \structfn^{\evar}(\Pa^\evar, \exovar^\evar)\}_{\evar \in \evars\setminus\sY} \cup \{\sY = \sy\}$. That is, the original functional relationships for $\sY$ are replaced with the constant functions $\sY = \sy$.
\end{definition}

We also assume the existence of a unique solution to the set of structural equations under all interventions, allowing us to define the potential response.
\begin{definition}[Potential Response, \citealp{pearl2009causality}]
    \label{def:potential-response}
    Let $\scm = \langle \evars, \exovars^{\evars}, \structfns, \Pr(\exovars^{\evars}) \rangle$ be an SCM, and let $\sX, \sY \subseteq \evars$. The \emph{potential response} of $\sX$ to the intervention $\doo(\sY=\sy)$, denoted $\sX_\sy(\exovars)$ is the solution for $\sX$ in the set of equations $\structfns_{\sy}$, that is, $\sX_\sy(\exovars) = \sX_{\scm_{\sy}}(\exovars)$, where $\scm_{\sy}$ is the submodel from intervention $\doo(\sY=\sy)$.
\end{definition}

\subsection{Structural Causal Game}

We now introduce a (structural) causal game, which draws on the SCM, emphasising the structural causal dependencies present.
\begin{definition}[Structural Causal Game, \citealp{hammond2021reasoning,everitt2021agent}]
    \label{def:scim}
    A (Markovian) (structural) causal game is a tuple 
    \begin{align*}
    \mec{\mascim}= \langle N, \evars, \exovars^{\evars}, \{\Pa^{\decisionvar}\}^{\decisionvar \in \decisionvars}, \structfns, \Pr(\exovars^{\evars}) \rangle    
    \end{align*}
     where 
    \begin{itemize}
        \item $N=\{1, \dots, n\}$ is a set of agents
        \item $\evars = \decisionvars \cup \structvars \cup \utilvars$ is a set of endogenous variables, partitioned into decision, chance and utility variables respectively. 
        \item $\exovars^{\evars} = \{\exovar^{\evar}\}_{\evar \in \evars}$ is a set of exogenous  variables, one for each endogenous variable
         
        \item $\{\Pa^{\decisionvar}\}^{\decisionvar \in \decisionvars}$ is a set of information parents for each decision variable $\decisionvar$, with $\Pa^\decisionvar \subseteq \evars\setminus\decisionvar$
        \item $\structfns = \{\evar = \structfn^{\evar}(\Pa^\evar, \exovar^\evar)\}_{\evar \in \evars \setminus \decisionvars}$ is a set of structural equations for each non-decision endogenous variable, as specified by the functions
        $\structfn^{\evar}: \dom(\Pa^\evar \cup \{ \exovar^\evar\}) \mapsto \dom(\evar)$, where $\Pa^\evar \subseteq \evars \setminus \{\evar\}$.
        \item $\Pr(\exovars^{\evars})$ is a distribution over the exogenous variables such that the individual exogenous variables are mutually independent.
    \end{itemize}
\end{definition}

The causal game has an associated graph:
\begin{definition}[Game Graph]
    \label{def:cid}
    Let $\mec{\mascim} =  \langle N, \evars, \exovars^{\evars}, \{\Pa^{\decisionvar}\}^{\decisionvar \in \decisionvars}, \structfns, \Pr(\exovars^{\evars})  \rangle $ be a causal game. 
    We define the \emph{game graph} to be the structure $\mascid = (N, \evars\cup \exovars^{\evars}, E)$, where $N=\{1, \dots, n\}$ is a set of agents and $(\evars\cup \exovars^{\evars}, E)$ is a DAG with:
    \begin{itemize}
        \item Four vertex types $\evars\cup \exovars^{\evars} = \structvars \cup \utilvars  \cup \decisionvars \cup \exovars^{\evars}$: the first three types are endogenous nodes in white circles, coloured diamonds and squares respectively; the fourth type are exogenous nodes, $\exovars^{\evars}$, in grey circles. The different colours of diamonds and squares correspond to different agents.
        \item Two types of edges:
        \begin{itemize}
            \item dependence edges, $(\evar, \evaralt) \in E$ if and only if either $\evaralt \in \evars\setminus\decisionvars$ and $\evar$ is an argument to the structural function $\structfn^{\evaralt}$, i.e. $\evar \in \Pa^{\evaralt} \cup \exovars^{\evaralt}$; or $\evaralt = \decisionvar \in \decisionvars$ and $\evar = \exovar_{\decisionvar}$.
            These are denoted with solid edges.
            \item information edges, $(\evar, \decisionvar) \in E$ if and only if $\evar \in \Pa^\decisionvar$ of the causal game. These are denoted with dashed edges.
        \end{itemize}
    \end{itemize}
\end{definition}

One can also draw a simpler graph by omitting the exogenous variables and their outgoing edges from the game graph.
\cref{subfig:mouse-mascid-real} is an example of a game graph.
We will only consider causal games for which the associated game graph is acyclic.

For each non-decision variable, the causal game specifies a distribution over it. For the decision variables, the causal game doesn't specify how it is distributed, only the information available at the time of the decision, as captured by $\Pa^\decisionvar$. The agents get to select their behaviour at each of their decision nodes, as follows.
Let $\mascim
$ be a causal game. 
A \emph{decision rule}, $\decisionrule^\decisionvar$, for a decision variable $\decisionvar \in \decisionvars^i \subseteq \decisionvars$ is a (measurable) structural function $\decisionrule^\decisionvar: \dom(\Pa^\decisionvar \cup \{\exovar^\decisionvar\}) \mapsto \dom(\decisionvar)$ where $\exovar^\decisionvar$ is uniformly distributed over the $[0, 1]$ interval.%
\footnote{For settings where we are interested in arbitrary counterfactual queries, a more complex form of  $\exovar^\decisionvar$ has advantages \citep{hammond2021reasoning}.}
A \emph{partial policy profile}, $\bm{\decisionrule}^{\decisionvars '}$ is a set of decision rules $\decisionrule^\decisionvar$ for each $\decisionvar \in \decisionvars ' \subseteq \decisionvars$.
A \emph{policy} refers to $\bm{\decisionrule}^{a}$, the set of decision rules for all of agent $a$'s decisions.
A \emph{policy profile}, $\bm{\decisionrule} = (\bm{\decisionrule}^{1}, \dots, \bm{\decisionrule}^{n})$ assigns a decision rule to every agent.

For a causal game $\mascim$, we can combine its set of structural equations $\structfns = \{\evar = \structfn^{\evar}(\Pa^\evar, \exovar^\evar)\}_{\evar \in \evars \setminus \decisionvars}$ with a policy profile $\bm{\decisionrule}$ to obtain a \emph{Policy-game SCM}, $\mascim(\bm{\decisionrule}) =  \langle \evars, \exovars^{\evars}, \structfns^{\bm{\decisionrule}}, \Pr(\exovars^{\evars}) \rangle$ with the set of structural equations $\structfns^{\bm{\decisionrule}} = \structfns \cup \{\decisionvar= \decisionrule^{\decisionvar}(\Pa^\decisionvar, \exovar^\decisionvar)\}_{\decisionvar \in \decisionvars}$. 
Note that there is a well-defined endogenous distribution, $\Pr^{[\mascim(\bm{\decisionrule})]}$, as the policy-game SCM is acyclic, due to the game graph being a DAG.

Each agent's expected utility in policy profile $\bm{\decisionrule}$ is given by
\begin{align}
    \label{eq:eu}
    EU^a(\bm{\decisionrule}) = \sum_{\{\utilval^{a} \in \dom(\utilvar^{a})\}}\Pr^{[\mascim(\bm{\decisionrule})]}(\utilvar^{a} = \utilval^{a}) \cdot \utilval^{a}.
\end{align}

\begin{definition}[Optimality and Best Response, \citealp{Koller2003-yh}]
    \label{def:optimality}
    Let $\bm{k} \subseteq \decisionvars^a$ and let $\bm{\decisionrule}$ be a policy profile. We say that partial policy profile $\hat{\bm{\decisionrule}}^{\bm{k}}$ is \emph{optimal for policy profile} $\bm{\decisionrule} = (\bm{\decisionrule}^{-\bm{k}}, \hat{\bm{\decisionrule}}^{\bm{k}})$ if in the induced causal game $\mascim(\bm{\decisionrule}^{-\bm{k}})$, where the only remaining decisions are those in $\bm{k}$, the decision rule $\hat{\bm{\decisionrule}}^{\bm{k}}$ is optimal, i.e. for all partial policy profiles $\bm{{\decisionrule}^{\bm{k}}}$
    \begin{align}
        EU^a((\bm{\decisionrule}^{-\bm{k}}, \hat{\bm{\decisionrule}}^{\bm{k}})) \geq EU^a((\bm{\decisionrule}^{-\bm{k}}, \bm{{\decisionrule}^{\bm{k}}})).
    \end{align}
    Agent $a$'s decision rule $\bm{{\decisionrule}}^{a}$ is a \emph{best response} to the partial policy profile $\bm{{\decisionrule}}^{-a}$ assigning strategies to the decisions of all other agents if for all strategies $\bm{{\decisionrule}}^{a}$
    \begin{align}
        EU^a((\bm{{\decisionrule}}^{-a}, \hat{\bm{{\decisionrule}}}^{a})) \geq EU^a((\bm{{\decisionrule}}^{-a}, \bm{{\decisionrule}}^{a})).
    \end{align}
\end{definition}

In the game-theoretic setting with multiple agents, we typically consider rational behaviour to be represented by a \emph{Nash Equilibrium}:

\begin{definition}[Nash Equilibrium, \citealp{Koller2003-yh}]
    \label{def:nash}
    A policy profile $\bm{\decisionrule} = (\bm{\decisionrule}^{1}, \dots, \bm{\decisionrule}^{n})$ is a \emph{Nash Equilibrium} if for all agents $a$, $\bm{\decisionrule}^{a}$ is a best response to $\bm{\decisionrule}^{-a}$.
\end{definition}

In this paper, we consider the refined concept of subgame perfect equilibrium (SPE), as follows
\begin{definition}[Subgame Perfect Equilibrium, \citealp{hammond2021equilibrium, hammond2021reasoning}]
    \label{def:spe}
    A policy profile $\bm{\decisionrule} = (\bm{\decisionrule}^{1}, \dots, \bm{\decisionrule}^{n})$ is a \emph{Subgame Perfect Equilibrium} if for all subgames, $\bm{\decisionrule}$ is a Nash equilibrium.
\end{definition}
Informally, in any subgame, the rational response is independent of variables outside of the subgame.  
See \cite{hammond2021reasoning, hammond2021equilibrium} for the formal definition of a subgame in a causal game.

\section{Getting a mechanised SCM by Marginalisation and Merging}
\label{sec:app-marg-merge}

A bandit algorithm repeatedly chooses an arm $X$ and receives a reward $U$. We can represent two iterations by using indices.
\begin{center}
    \begin{influence-diagram}
    \node (X1) {$X^1$};
    \node (U1) [right = of X1] {$U^1$};
    \node (X2) [right = of U1] {$X^2$};
    \node (U2) [right = of X2] {$U^2$};
    \edge {X1} {U1}
    \edge {X2} {U2}
    \end{influence-diagram}
\end{center}
If we include the mechanisms in the graph, we can model the fact that the policy at time 2, i.e.\ $\mecvar{X^2}$, depends on the arm and outcome at time 1:
\begin{center}
    \begin{influence-diagram}
    \node (X1) {$X^1$};
    \node (U1) [right = of X1] {$U^1$};
    \node (X2) [right = of U1] {$X^2$};
    \node (U2) [right = of X2] {$U^2$};
    \edge {X1} {U1}
    \edge {X2} {U2}
    \node (fX1) [mechanism, below = of X1] {$\mecvar{X^1}$};
    \node (fU1) [mechanism, below = of U1] {$\mecvar{U^1}$};
    \node (fX2) [mechanism, below = of X2] {$\mecvar{X^2}$};
    \node (fU2) [mechanism, below = of U2] {$\mecvar{U^2}$};
    \edge {fX1} {X1}
    \edge {fU1} {U1}
    \edge {fX2} {X2}
    \edge {fU2} {U2}
    \path (fX1) edge[mechanism-edge, bend right] (fX2);
    \path (fU1) edge[mechanism-edge, bend right] (fU2);
    \edge{X1,U1} {fX2};
    \end{influence-diagram}
\end{center}
To arrive at the final mechanism graph, we first marginalise $X^1$ and $U^1$. The path from $\mecvar{U^1}$ to $\mecvar{X^2}$ previously mediated by $U^1$ now becomes a direct edge.
\begin{center}
    \begin{influence-diagram}
    \node (X2) [] {$X^2$};
    \node (U2) [right = of X2] {$U^2$};
    \edge {X2} {U2}
    \node (fX2) [mechanism, below = of X2] {$\mecvar{X^2}$};
    \node (fU2) [mechanism, below = of U2] {$\mecvar{U^2}$};
    \node (fU1) [mechanism, left = of fX2] {$\mecvar{U^1}$};
    \node (fX1) [mechanism, left = of fU1] {$\mecvar{X^1}$};
    \path (fU1) edge[mechanism-edge] (fX2);
    \edge {fX2} {X2}
    \edge {fU2} {U2}
    \path (fX1) edge[mechanism-edge, bend right] (fX2);
    \path (fU1) edge[mechanism-edge, bend right] (fU2);
    \end{influence-diagram}
\end{center}
Finally, we merge $\mecvar{X^1}$ with $\mecvar{X^2}$ and $\mecvar{U^1}$ with $\mecvar{U^2}$, with the understanding that observing the merged node $\mecvar{X}$ corresponds to observing $\mecvar{X^2}$, while intervening on $\mecvar{X}$ means setting both $\mecvar{X^1}$ and $\mecvar{X^2}$. This yields the following mechanised causal graph (note the terminal edge due to $U^2$ not having any children)
\begin{center}
    \begin{influence-diagram}
    \node (X2) [] {$X^2$};
    \node (U2) [right = of X2] {$U^2$};
    \edge {X2} {U2}
    \node (fX2) [mechanism, below = of X2] {$\mecvar{X}$};
    \node (fU2) [mechanism, below = of U2] {$\mecvar{U}$};
    \edge[terminal-edge] {fU2} {fX2}
    \edge {fX2} {X2}
    \edge {fU2} {U2}
    \end{influence-diagram}
\end{center}
Applying \cref{alg:xcg-to-macid} yields the expected game graph:
\begin{center}
    \begin{influence-diagram}
    \node (X2) [decision, player1] {$X^2$};
    \node (U2) [right = of X2, utility, player1] {$U^2$};
    \edge {X2} {U2}
    \end{influence-diagram}
\end{center}

\section{Example of relativism of variable types}
\label{app:relativism-example}

This example illustrates the discussion of Sec.\ref{subsec:relativism} with an example of the relativism of variable types.
Whether a variable gets classified as a decision, chance, or utility node by \cref{alg:xcg-to-macid} depends on which other nodes are included in the graph.
To see this, consider the graph in \cref{fig:thermometer} in which a blueprint  for a thermometer, $B$, influences the constructed thermometer, $T$, and thereby whether the reading is correct or not, $C$.

\begin{figure}
\centering
    \begin{subfigure}[b]{0.33\textwidth}
        \centering
        \begin{influence-diagram}
        \node (B) {$B$};
        \node (T) [right = of B] {$T$};
        \node (C) [right = of T] {$C$};
        \edge {B}{T}
        \edge {T}{C}
        \node (fB) [above = of B, mechanism] {$\mecvar{B}$};
        \node (fT) [above = of T, mechanism] {$\mecvar{T}$};
        \node (fC) [above = of C, mechanism] {$\mecvar{C}$};
        \edge {fB} {B}
        \edge {fT} {T}
        \edge {fC} {C}
        \edge[mechanism-edge] {fT} {fB}
        \path (fC) edge[->, terminal-edge, bend right](fB);
        \end{influence-diagram}\\[0.5cm]
        
        \begin{influence-diagram}
        \node (B) [decision]{$B$};
        \node (T) [right = of B] {$T$};
        \node (C) [right = of T, utility] {$C$};
        \edge {B}{T}
        \edge {T}{C}
        \end{influence-diagram}
        \caption{Include $B$, $T$, and $C$}
        \label{subfig:btc}
    \end{subfigure}
    \begin{subfigure}[b]{0.3\textwidth}
        \centering
        \begin{influence-diagram}
        \node (T) [] {$T$};
        \node (C) [right = of T] {$C$};
        \edge {T}{C}
        \node (fT) [above = of T, mechanism] {$\mecvar{T}$};
        \node (fC) [above = of C, mechanism] {$\mecvar{C}$};
        \edge {fT} {T}
        \edge {fC} {C}
        \edge[terminal-edge] {fC} {fT}
        \end{influence-diagram}\\[0.5cm]
        
        \begin{influence-diagram}
        \node (T) [decision] {$T$};
        \node (C) [right = of T, utility] {$C$};
        \edge {T}{C}
        \end{influence-diagram}
        \caption{Include $T$ and $C$}
        \label{subfig:tc}
    \end{subfigure}
    \begin{subfigure}[b]{0.3\textwidth}
        \centering
        \begin{influence-diagram}
        \node (B) {$B$};
        \node (T) [right = of B] {$T$};
        \edge {B}{T}
        \node (fB) [above = of B, mechanism] {$\mecvar{B}$};
        \node (fT) [above = of T, mechanism] {$\mecvar{T}$};
        \edge {fB} {B}
        \edge {fT} {T}
        \edge[terminal-edge] {fT} {fB}
        \end{influence-diagram}\\[0.5cm]
        
        \begin{influence-diagram}
        \node (B) [decision]{$B$};
        \node (T) [right = of B, utility] {$T$};
        \edge {B}{T}
        \end{influence-diagram}
        \caption{Include $B$ and $T$}
        \label{subfig:bt}
    \end{subfigure}
    \caption{What is a decision or a utility node depends on what other variables are included. Here the variables represent a blueprint for a thermometer (B), the constructed thermometer (T), and thereby whether the reading is correct or not (C)}
    \label{fig:thermometer}
\end{figure}

Considering first \cref{subfig:btc} where a first modeler has included all three variables. We find that the designer will produce a different blueprint if they are aware that blueprints are interpreted according to a different convention (i.e., if $\mecvar{T}$ changes), or if temperature was measured at a different scale (a change to $\mecvar{C}$).
Accordingly, \cref{alg:xcg-to-macid} labels $B$ a decision, and $C$ a utility.
This makes sense in this context: the designer chooses a blueprint to ensure that the thermometer gives a correct reading.

A second modeler may not care about the blueprint, and only wonder about the relationship between the produced thermometer $T$ and the correctness of the reading $C$. See \cref{subfig:tc}. 
They will find that if temperature was measured at a different scale, a slightly different thermometer would have been produced, i.e.\ $\mecvar{C}$ now influences $\mecvar{T}$ rather than $\mecvar{B}$ (as in \cref{subfig:btc}).
This is not a contradiction, as $\mecvar{T}$ is a different object in \cref{subfig:btc}  and~\ref{subfig:tc}. In \cref{subfig:btc}, $\mecvar{T}$ represents the relationship between $B$ and $T$, while in \cref{subfig:tc}, $\mecvar{T}$ represents the marginal distribution of $T$.
As a consequence, \cref{alg:xcg-to-macid} will label $T$ as a decision optimising $C$.
This is not unreasonable: a decision was made to produce a particular kind of thermometer with the aim of getting correct temperature readings.

A third modeler may not bother to represent the correctness of the readings explicitly, and only consider the blueprint and the produced thermometer, see \cref{subfig:bt}.
They will find that the blueprint is optimised to obtain a particular kind of thermometer.
Again, this is not unreasonable, as in this context we may well speak of the designer deciding on a blueprint that will produce the right kind of thermometer.

\section{Supplementary Figures}
\label{app:supp-figs}

\begin{figure}
        \centering
        \begin{influence-diagram}

          \node (S1)  {$S_1$};
    
          \node (S2) [right = 2 of S1] {$S_2$};
         
          \node (S3) [right = 2 of S2] {$S_3$};
          
          \node (help1) [below = of S1, phantom] {};
          
          \node (D1) [right = of help1] {$D_1$};
          
          \node (D2) [right = 2 of D1] {$D_2$};
          
          \node (help2) [below = of D1, phantom] {};
          
          \node (X1) [right = of help2] {$X_1$};
          
          \node (X2) [right = 2 of X1] {$X_2$};
          
          \node (U) [above = of S3] {$U$};

          \node (MS1) [above = of S1, mechanism] {$\mecvar{S_1}$};
    
          \node (MS2) [above = of S2, mechanism] {$\mecvar{S_2}$};
         
          \node (MS3) [right = 0.1 of S3, mechanism] {$\mecvar{S_3}$};
          
          \node (MD1) [left = of D1, mechanism] {$\mecvar{D_1}$};
          
          \node (MD2) [right = 0.1 of D2, mechanism] {$\mecvar{D_2}$};
          
          \node (MX1) [left = of X1, mechanism] {$\mecvar{X_1}$};
          
          \node (MX2) [left = of X2, mechanism] {$\mecvar{X_2}$};
          
          \node (MU) [left = of U, mechanism] {$\mecvar{U}$};

          \path (MS1) edge[function-edge] (S1);
          \path (MS2) edge[function-edge] (S2);
          \path (MS3) edge[function-edge] (S3);
          \path (MD1) edge[function-edge] (D1);
          \path (MD2) edge[function-edge] (D2);
          \path (MX1) edge[function-edge] (X1);
          \path (MX2) edge[function-edge] (X2);
          \path (MU) edge[function-edge] (U);

          \edge {S1} {S2};
    
          \edge {S2} {S3};
          
          \edge {D1} {D2};
          
          \edge {X1} {S2};
          
          \edge {X2} {S3};
          
          \edge {S1} {D1};
          
          \edge {S2} {D2};
          
          \edge {S1} {D2};
          
          \edge {D1} {X1};
          
          \edge {D2} {X2};
          
          \path (S1) edge[->, bend right=7] (U);
          
          \edge {S2} {U};
          
          \edge {S3} {U};
        
          \path (MU) edge[terminal-edge, bend right=0] (MD1);
          \path (MS2) edge[mechanism-edge, bend right=0] (MD1);
          \path (MS3) edge[mechanism-edge, bend right=90] (MD1);
          \path (MD2) edge[mechanism-edge, bend left=128] (MD1);
          \path (MX1) edge[mechanism-edge, bend right=0] (MD1);
          \path (MX2) edge[mechanism-edge, bend left=65] (MD1);
          \path (MU) edge[terminal-edge, bend right=0] (MD2);
          \path (MS3) edge[mechanism-edge, bend right=0] (MD2);
          \path (MX2) edge[mechanism-edge, bend right=0] (MD2);

        \end{influence-diagram}
        \caption{Full mechanised causal graph for MAMDP example in \cref{fig:mamdp}}
        \label{fig:supp-mamdp-xcg-model}
    \end{figure}

\begin{figure}
        \centering
        \begin{influence-diagram}
          \node (help1) [phantom] {};
          
          \node (helpup) [above = 1.5 of help1, phantom] {};
          
          \node (helpdown) [below = 1.5 of help1, phantom] {};
          
          \node (helpleft) [left = 1.5 of help1, phantom] {};
          
          \node (helpright) [right = 1.5 of help1, phantom] {};

          \node (R1) at (helpleft|-helpup)  {$R^1$};
          
          \node (R2) at (helpright|-helpup)  {$R^2$};

          \node (H1) at (helpleft|-helpdown)  {$H^1$};
          
          \node (H2) at (helpright|-helpdown)  {$H^2$};

          \node (T) [below = of helpdown] {$\theta$};

          \node (U) [right = of helpright] {$U$};

          \edge {R1} {R2};
          
          \edge {H1} {H2};
          
          \edge {R1} {H2};
    
          \edge {H1} {R1};
          
          \edge {H2} {R2};
          
          \edge {H1} {R2};
          
          \path (T) edge[->, bend right=45] (U);
          
          \edge {R1} {U};
          
          \edge {R2} {U};
          
          \edge {H1} {U};
          
          \edge {H2} {U};
          
          \edge {T} {H1};
          
          \edge {T} {H2};
              
          \node (MH1) [below = of H1, mechanism] {$\mecvar{H^1}$};
          \node (MH2) [below = of H2, mechanism] {$\mecvar{H^2}$};
          \node (MR1) [above = of R1, mechanism] {$\mecvar{R^1}$};
          \node (MR2) [above = of R2, mechanism] {$\mecvar{R^2}$};
          \node (MT) [below = of T, mechanism] {$\mecvar{\theta}$};
          \node (MU) [right = of U, mechanism] {$\mecvar{U}$};

            \path (MH1) edge[function-edge] (H1);
            \path (MH2) edge[function-edge] (H2);
            \path (MR1) edge[function-edge] (R1);
            \path (MR2) edge[function-edge] (R2);
            \path (MT) edge[function-edge] (T);
            \path (MU) edge[function-edge] (U);

              \path (MU) edge[mechanism-edge, bend left=55] (MH1);
              \path (MH2) edge[mechanism-edge, bend left=12] (MH1);
              \path (MR1) edge[mechanism-edge, bend left=35] (MH1);
              \path (MR2) edge[mechanism-edge, bend left=5] (MH1);

              \path (MU) edge[mechanism-edge, bend left=12] (MH2);
              \path (MR2) edge[mechanism-edge, bend left=25] (MH2);
              
              \path (MU) edge[mechanism-edge, bend right=75] (MR1);
              \path (MH1) edge[mechanism-edge, bend left=25] (MR1);
              \path (MR2) edge[mechanism-edge, bend left=0] (MR1);
              \path (MH2) edge[mechanism-edge, bend left=0] (MR1);
              \path (MT) edge[mechanism-edge, bend left=75] (MR1);
              
              \path (MU) edge[mechanism-edge, bend right=0] (MR2);
              \path (MH1) edge[mechanism-edge, bend left=5] (MR2);
              \path (MH2) edge[mechanism-edge, bend left=25] (MR2);
              \path (MT) edge[mechanism-edge, bend left=35] (MR2);

        \end{influence-diagram}
        \caption{Full mechanised causal graph for Assistance Game example in \cref{fig:cirl}}
        \label{fig:supp-cirl-xcg-model}
    \end{figure}

\end{document}